\theoremstyle{plain}
\newtheorem{theorem}{Theorem}[section]
\newtheorem{lemma}[theorem]{Lemma}
\theoremstyle{definition}
\newtheorem{definition}[theorem]{Definition}
\theoremstyle{remark}
\newcommand{\z}{\m{z}}
\newcommand{\W}{\m{W}}
\newcommand{\x}{\m{x}}
\newcommand{\y}{\m{y}}
\newcommand{\N}{N}
\newcommand{\Id}{\m{I}}
\newcommand{\V}{V}
\renewcommand{\v}{\m{v}}
\newcommand{\ntk}{\Theta}
\newcommand{\h}{\m{h}}
\newcommand{\DD}{\m{D}}
\renewcommand{\th}{\sm{\theta}}
\newcommand{\M}{\m{M}}
\newcommand{\s}{s}
\newcommand{\X}{\m{X}}
\newcommand{\Y}{\m{Y}}
\newcommand{\Z}{\m{Z}}
\newcommand{\B}{\m{B}}
\newcommand{\G}{\m{G}}
\newcommand{\condtr}{\varphi}
\newcommand{\covtr}{\sigma}
\newcommand{\fourmom}[1][\z^*]{\sigma^{2}_{#1\cdot#1}}
\newcommand{\sigh}{\sigma^2_{\h^*}}
\newcommand{\sigphi}{\sigma^2_{\phi(\h^*)}}
\newcommand{\sigx}{\sigma_{\x}}
\newcommand{\tG}{M}
\newif\ifcomments
\newcommand{\xlc}[1]{{\color{blue}[XLC: #1]}}
\newcommand{\sss}[1]{{\color{purple}[SSS: #1]}}
\newcommand{\aga}[1]{{\color{red}[AA: #1]}}
\newcommand{\xlc}[1]{}
\newcommand{\sss}[1]{}
\newcommand{\aga}[1]{}
\icmltitlerunning{DEQ initialization}
\begin{document}

\twocolumn[
\icmltitle{Deep equilibrium networks are sensitive \\
           to initialization statistics}




\begin{icmlauthorlist}
\icmlauthor{Atish Agarwala}{goog}
\icmlauthor{Samuel S. Schoenholz}{goog}
\end{icmlauthorlist}

\icmlaffiliation{goog}{Google Research, Brain Team}

\icmlcorrespondingauthor{Atish Agarwala}{thetish@google.com}
\icmlcorrespondingauthor{Samuel S. Schoenholz}{schsam@google.com}

\icmlkeywords{Machine Learning, ICML}

\vskip 0.3in
]



\printAffiliationsAndNotice{} 

\begin{abstract}
Deep equilibrium networks (DEQs) are a promising way to construct models which trade off memory for compute.
However, theoretical understanding of these models is still lacking compared to traditional networks, in part because
of the repeated application of a single set of weights.
We show that DEQs are sensitive to the higher order statistics of the matrix families from which they are
initialized. In particular, initializing with orthogonal or symmetric matrices allows for
greater stability in training.
This gives us a practical prescription for initializations which allow for training with
a broader range of initial weight scales.
\end{abstract}

\section{Introduction}

Deep equilibrium networks (DEQs) are a network architecture which uses implicitly-defined
layers to get benefits of deep networks with a smaller memory footprint \cite{bai_deep_2019}. DEQs have shown
competitive performance on image tasks \cite{bai_multiscale_2020, gilton_deep_2021},
as well as
on language tasks with transformer DEQ models
\cite{bai_deep_2019, bai_stabilizing_2021}.

A DEQ layer
can be defined as follows: given an input $\x$, the output $\z^*$ of a DEQ layer is given
implicitly by
\begin{equation}
\z^* = f_{\th}(\x, \z^*)
\label{eq:deq_basic}
\end{equation}
for a function $f$ parameterized by $\th$.
One way to solve for $\z^*$ is via direct iteration;
in this case the DEQ can be interpreted as a deep network
with identical parameters for each layer (tied weights),
as opposed to the more traditional case of independent parameters
for each layer (untied weights).

Despite their implicit definition,
DEQs can be trained with backpropagation. Equation \ref{eq:deq_basic}
gives an implicit equation for any Vector-Jacobian Products (VJPs) of the DEQ layer which
can also be found using a fixed point solver.
Training DEQs requires careful consideration of both initialization
as well as the solver used to find the fixed points \cite{bai_deep_2019, bai_stabilizing_2021}. This gives
another way to trade off memory and compute, but brings
the additional complication of maintaining convergence
of solvers throughout learning. For example, most practical applications use small
initializations to maintain stability
\cite{bai_deep_2019}. Empirical approaches to stabilize learning
include regularization of the Jacobian of the DEQ map \cite{bai_stabilizing_2021}.

While deep networks with untied weights have many theoretically-motivated
initialization schemes
\cite{glorot_understanding_2010, he_delving_2015, xiao_dynamical_2018, schoenholz_deep_2017, martens_rapid_2021},
understanding initialization in implicitly defined networks is a relatively new field
\cite{elghaoui_implicit_2021, massaroli_dissecting_2020, winston_monotone_2021}.
Some progress has been made on understanding DEQs in the
wide-network limit by formally relating their properties to
wide networks in the untied weights setting \cite{feng_neural_2020}. However, in practical settings
DEQs are often trained in a regime where the Jacobians approach divergence \cite{bai_deep_2019, bai_stabilizing_2021} -
or equivalently, in regimes where the DEQ effectively has large depth.
This is precisely the regime where the wide network limit breaks down
\cite{hanin_finite_2019}.
 
In this paper we develop a theory of DEQs at initialization which allows us to
understand the differences between the DEQ setting and the traditional
untied weights setting. In particular, we show that the choice of initial matrix family
can lead to quantitatively different behavior even in the wide network limit.
Using the linear DEQ, whose output can be computed analytically, we prove the following:
\begin{itemize}
\item We compute the convergence properties of the fixed point for
random i.i.d. matrices.
\item We prove that the tied weights setting
is qualitatively and quantitatively different than the untied weights setting,
particularly when using non-i.i.d Gaussian
weight matrices.
\item We show that variance is reduced when initializing with orthogonal or Gaussian orthogonal ensemble
(GOE) matrices.
\end{itemize}
We then examine the convergence properties of a simple non-linear DEQ:
\begin{itemize}
\item We derive a self-consistent set of equations which determine the stability
of the fixed point.
\item We prove that the untied weights theory gives a good approximation
to convergence for orthogonal and randomly initialized matrices away from
the divergence threshold.
\item We show that in the symmetric case the untied weights theory doesn't hold.
\end{itemize}

We conclude by demonstrating a practical consequence of the theory:
networks initialized with
random orthogonal or symmetric weight matrices have more stable learning
which is likely to converge more quickly than with standard i.i.d. initialization.
This increased stability also allows for networks to be trained with a broader set of
initial weight scales. This opens a new avenue for DEQ initialization:
to optimizing the initialization family rather than just the initialization scale.

\section{Theory of linear DEQs}

\subsection{Fixed point properties}

We define a \emph{linear DEQ} $\z^*(\x)$ as
\begin{equation}
\z^* = \W\z^*+\x
\label{eq:linear_deq}
\end{equation}
where $\x$ is an $\N$-dimensional input vector and
$\W$ is an $\N\times\N$ matrix.
One way to find $\z^*$ is as the limit of the iterated map
\begin{equation}
\z_{t+1} = \W\z_{t}+\x.
\end{equation}
This is a linear neural network with
input injection at each layer, as well as
the same $\W$ shared across layers.
Alternatively, we can directly solve for $\z^*$ using the
implicit equation:
\begin{equation}
\z^* = (\Id-\W)^{-1}\x.
\end{equation}
While this solution exists for all $\W$ with $\Id-\W$ invertible, the iterated
map is unstable if
$\W$ has any eigenvalues with magnitude greater than or equal to
$1$, as can be seen by the power series
\begin{equation}
\z^* = \lim_{t\to\infty}\z_{t} = \lim_{t\to\infty}\sum_{k=0}^{t}\W^{k} \x.
\label{eq:linear_deq_series}
\end{equation}
A similar convergence criterion was described in \cite{gao_global_2022}.
Convergence can be guaranteed by selecting $\W$ from a parameterized
family of matrices with bounded spectral norm,
as seen in \cite{kawaguchi_theory_2020}.

Previous work has focused analysis of the
\emph{untied weights} DEQ \cite{feng_neural_2020}. The untied weights
DEQ is defined by the iterative equation
\begin{equation}
\z_{t+1} \equiv \W_{k} \z_{t}+\x
\end{equation}
where the $\W_{k}$ are independent and drawn
from the same distribution. In the untied
weights case, there is no limit $\z_{\infty}$; however, the elements $\z_{t}$ have a
common limiting distribution for large $t$
and large $\N$. We will compare the
normal linear DEQ (\emph{tied weights} case)
with the untied weights case throughout the
remainder of this section.

\subsubsection{Random Gaussian initialization}

We now return to the weight tied setting. For
i.i.d. $\W$ with Gaussian entrie, for fixed $t$ the elements of $\z_{t}$ converge in distribution
to Gaussians as $\N\to\infty$ with previously
derived statistics \cite{yang_tensor_2021, feng_neural_2020}.
The moments are identical to the untied weights
case, and, for $\z_{0} = 0$ are given by
\begin{equation}
\expect[(\z_{t})_{i}] = \x_{i},~\Var[(\z_{t})_{i}] = \frac{1}{\N}\x\cdot\x\sum_{k=1}^{t}\V^{t} 
\end{equation}
where the elements of $\W$ (and $\W_{t}$) have variances $\V/\N$.

It has been argued \cite{feng_neural_2020} that the behavior as $t\to\infty$ can be derived
from the  $\N\to\infty$ limit, by applying the tensor program (TP) framework from 
\cite{yang_tensor_2021} (taking the limit $\N\to\infty$), and then taking the limit
$t\to\infty$. However, the TP framework is well-defined only for programs with finite length.
In particular, there is a non-zero probability that the iterated map will not converge even
for $\V <1$ due to eigenvalue fluctuations.
This has practical consequences as we will show
in Section \ref{sec:output_var}.

Nevertheless, in the case of 
linear networks, we are able to prove that $\z^*$ does in fact converge to a Gaussian
in distribution in the limit of infinite depth
(Appendix \ref{app:inf_depth_width}), extending the finite-depth result of \cite{feng_neural_2020}.
The basic idea of the proof is that, with probability close to $1$,
$\z_{t}$ and $\z^*$ become arbitrarily close for
large enough $\N$ and $t$ if $\V < 1$. The convergence in distribution of $\z_{t}$ can then
be used to show convergence in distribution of $\z^*$. This gives us the moments
\begin{equation}
\expect[(\z^*)_{i}] = \x_{i},~\Var[(\z^*)_{i}] = \frac{1}{\N}\x\cdot\x\frac{\V}{1-\V}
\end{equation}
as conjectured by \cite{feng_neural_2020}.
We provide an alternate derivation of the moments
using
operator-valued free probability theory in
Appendix \ref{app:op_free_prob}.

These results require that $\W$ (or the
$\W_{k}$ in the untied weights case) have i.i.d. random entries. In the untied weights case,
for low order moments we can relax the Gaussianity assumption and instead just assume
that the $\W_{k}$ are drawn independently from a rotationally invariant ensemble
(distribution over matrices)
- extending the result of \cite{feng_neural_2020} to other matrix families.
Let $\tr$ define
the $\N$-normalized trace (average eigenvalue).
Then, if $\V = \tr[\W_{k}^{\tpose}\W_{k}] <1$,
$\lim_{t\to\infty}\Var[(\z_{t})_{i}] = \frac{1}{\N}\x\cdot\x\frac{\V}{1-\V}$
(Appendix \ref{app:two_mom}).

\subsubsection{Tied weights, non-Gaussian initialization}

However, in the tied weights case, even low order moments can depend on the choice of ensemble for $\W$. We will consider two alternative
ensembles:
\begin{itemize}
\item \emph{Orthogonal} - $\W = \sqrt{\V}\m{O}$
for $\m{O}$ random orthogonal (Haar distributed).
\item \emph{GOE} - Random symmetric matrix,
Gaussian entries with variance $\V/\N$ off-diagonal and $2\V/\N$ on-diagonal.
\end{itemize}

For DEQs, a basic calculation (Appendix \ref{app:two_mom})
gives us
\begin{equation}
\Var[\z^*_{i}] = \frac{1}{\N}\left(\expect[\z^*\cdot\z^*]-\expect[\z^*]\cdot\expect[\z^*]\right)
\end{equation}
The second term evaluates to $\x\cdot\x$. The first term can be
written as a power series:
\begin{equation}
\expect[\z^*\cdot\z^*] = \expect_{\W}\left[\tr\left[\left(\sum_{k=0}^{\infty}\W^{k}\right)^{\tpose}\left(\sum_{k=0}^{\infty}\W^{k}\right)\right]\right] \x\cdot\x
\label{eq:z_mag_eq}
\end{equation}
which converges for $\V$ less than some $\V_{c}$. In
the i.i.d. random case, $\V_{c} = 1$.

We evaluate the power series in Appendix
\ref{app:two_mom}. In the orthogonal case, the variance matches the i.i.d. random case,
with $\V_{c}  = 1$.
However the GOE case gives dramatically different
behavior.
We can
compute the variance using Equation \ref{eq:z_mag_eq}. We note that
$\W_{k}^{\tpose} = \W_{k}$, and that $\tr[\W^{k}] = C_{k}V^{k}$,
where $C_{k}$ is the $k$th Catalan number. Using generating series, we have
\begin{equation}
\Var[(\z^*_{i})^2] = \frac{1}{\sqrt{1-4\V}}-\frac{1-\sqrt{1-4\V}}{2\V}-1
\end{equation}

One interesting feature is that $\Var[(\z^*_{i})^2]$ diverges differently for symmetric $\W$
than for random and orthogonal $\W$.
The divergence happens at $\V_{c} = 1/4$ - which reflects the spectral radius of $2\sqrt{\V}$
of the semi-circular law.

The behavior near $\V_{c}$ is of interest as well. For $\delta\equiv 1-\V/\V_{c}$, for the GOE ensemble
the asymptotic behavior of the variance of $(\z^*_{i})^2$
is
given by $O(\delta^{-0.5})$ for $\delta\ll 1$, while for random and orthogonal the divergence is $O(\delta^{-1})$. As we will see,
the differences become more pronounced for higher order moments.

The difference in $\Var[(\z^*_{i})^2]$ is relevant for learning dynamics. 
This can be seen explicitly in the large width limit where
the neural tangent kernel (NTK) controls learning
\cite{jacot_neural_2018}. For the linear DEQ, the
NTK is given by
$\frac{1}{\N^2}\expect[\z^*\cdot\z^*]^2 \x\cdot\x'$ for an input
pair $(\x, \x')$ (Appendix \ref{app:deq_NTK}) - 
so the GOE and random ensembles give different functions in the wide network limit.

\subsection{Variability of outputs}

\label{sec:output_var}

In the limit of infinite width, the first and second moments
of $\z^*$ are often sufficient to characterize the output of
the DEQ.
We saw that the first and second
moments of $\z^*$ are identical for random and orthogonal DEQs with
tied weights, which match the statistics of the untied weights case.
However, the GOE ensemble had a different second moment.
This already suggests that different distributions for $\W$ have
different behavior.

In the wide but finite dimensional setting, the differences
between the matrix families become more stark. While the full
distribution for finite $\N$ is intractable, we can understand
some of the differences by understanding a particular $4$th moment
of $\z^*$. These differences will be reflected in the convergence
dynamics of $\z^*$, as well as the implicit bias of the DEQ.

We begin by defining a particular $4$th moment which we call the
\emph{length variance}.
Given a random input $\x$ and a random matrix $\M$ we have:
\begin{definition}
\label{def:four_mom}
For $\z = \M\x$, the \emph{length variance}
of $\z$ is defined as:
\begin{equation}
\fourmom[\z] = \frac{1}{\N}\expect_{\M}[\Var_{\x}[\z\cdot\z]]
\end{equation}
\end{definition}

In order to compute the length variance, we use the following lemma (proof in Appendix \ref{app:four_mom}):
\begin{lemma}
\label{lem:four_mom}
Let $\M$ be an $\N\times\N$-dimensional random matrix. Let $\x$ be an
$\N$-dimensional vector with i.i.d. Gaussian elements with $0$ mean and variance
$\sigma_{\x}^2$, and let $\z = \M\x$. Then we have:
\begin{equation}
\fourmom[\z] = \frac{2}{\N} \sigma_{\x}^4\expect_{\M}\left[\tr[(\M^{\tpose}\M)^2]\right]
\end{equation}
\end{lemma}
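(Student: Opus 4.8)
The plan is to prove the identity by conditioning on the random matrix $\M$ and observing that, once $\M$ is fixed, $\z\cdot\z$ is a quadratic form in the Gaussian vector $\x$. Write $\z\cdot\z = \x^{\tpose}\M^{\tpose}\M\x = \x^{\tpose}A\x$ with $A\equiv\M^{\tpose}\M$, which is symmetric and positive semidefinite. Since Definition~\ref{def:four_mom} sets $\fourmom[\z]=\frac{1}{\N}\expect_{\M}[\Var_{\x}[\z\cdot\z]]$, it suffices to compute $\Var_{\x}[\x^{\tpose}A\x]$ for an arbitrary fixed symmetric $A$ and then substitute $A=\M^{\tpose}\M$ and average over $\M$ at the end. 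So the lemma reduces to the classical formula for the variance of a Gaussian quadratic form.

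To establish that formula I would expand $\x^{\tpose}A\x=\sum_{i,j}A_{ij}x_ix_j$ and apply Isserlis' (Wick's) theorem to the Gaussian fourth moments, $\expect[x_ix_jx_kx_l]=\sigma_{\x}^4(\delta_{ij}\delta_{kl}+\delta_{ik}\delta_{jl}+\delta_{il}\delta_{jk})$. Contracting the three pairings against $A_{ij}A_{kl}$ gives $\expect[(\x^{\tpose}A\x)^2]=\sigma_{\x}^4\big(\tr[A]^2+\tr[A^{\tpose}A]+\tr[A^2]\big)$, while $\expect[\x^{\tpose}A\x]=\sigma_{\x}^2\,\tr[A]$. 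Since $A=\M^{\tpose}\M$ is symmetric, $\tr[A^{\tpose}A]=\tr[A^2]$, so subtracting the square of the mean leaves $\Var_{\x}[\z\cdot\z]=2\sigma_{\x}^4\,\tr[A^2]=2\sigma_{\x}^4\,\tr[(\M^{\tpose}\M)^2]$ for each fixed $\M$.

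Taking $\expect_{\M}$ of this per-matrix variance and dividing by $\N$ then gives $\fourmom[\z]=\frac{2}{\N}\sigma_{\x}^4\,\expect_{\M}\big[\tr[(\M^{\tpose}\M)^2]\big]$, which is the claim. I do not expect a genuine obstacle here: the only points requiring care are the combinatorial bookkeeping of the three Wick pairings — in particular, correctly identifying which pairings give $\tr[A^{\tpose}A]$ versus $\tr[A^2]$, quantities that coincide only because $\M^{\tpose}\M$ is symmetric — and keeping the trace-normalization factor consistent so that the prefactor comes out as $2/\N$. Nothing else in the argument uses any property of $\M$: its randomness and any structure (i.i.d., orthogonal, GOE, \dots) enter only through the final expectation $\expect_{\M}[\tr[(\M^{\tpose}\M)^2]]$.
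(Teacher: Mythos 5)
Your proposal is correct and takes essentially the same approach as the paper: both condition on $\M$ and reduce the claim to the variance of the Gaussian quadratic form $\x^{\tpose}\M^{\tpose}\M\x$, yielding $\Var_{\x}[\z\cdot\z]=2\sigma_{\x}^4\,\tr[(\M^{\tpose}\M)^2]$ before averaging over $\M$. The only mechanical difference is that the paper diagonalizes via the singular value decomposition and uses $\expect[|\x\cdot\v_{s}|^4]=3\sigma_{\x}^4$ for the projections onto the singular vectors, whereas you obtain the same formula by Wick contractions in the original basis; the symmetry of $\M^{\tpose}\M$, which you correctly invoke to merge the two nontrivial pairings, plays the same role in both arguments.
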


Without loss of generality, we assume that $\sigma^2_{\x} = 1$ for the remainder of this section.

\subsubsection{Untied weights case}

We first consider the case of untied weights, where we compute
$\sigma^{2}_{\infty}\equiv \lim_{t\to\infty} \sigma^{2}_{\z_{t}\cdot\z_{t}}$.
In order to define $\sigma^{2}_{\z_{t}\cdot\z_{t}}$ in terms
of Lemma \ref{lem:four_mom}, $\x$ is the random input, and
$\M = \sum_{k=0}^{t}\prod_{j=0}^{k}\W_{k}$.

In the random and GOE cases, a direct calculation
(Appendix \ref{app:four_mom}) shows that
in the limit of large $\N$
we have
\begin{equation}
\sigma^{2}_{\infty} =  \frac{2}{\N}\left(\frac{2}{(1-\V)^2}+\frac{1}{(1-\V^2)^2}-\frac{2}{1-\V^2}\right)
\end{equation}
Meanwhile, for orthogonal $\W_{k}$ we have
\begin{equation}
\sigma^{2}_{\infty} = \frac{2}{\N}\left(\frac{2}{(1-\V)^2}-\frac{1}{1-\V^2}\right)
\end{equation}

The two main features here are that $\sigma^2_{\infty}$ is well-defined in
the untied weights case, and for $\delta \equiv 1-\V$,
$\sigma^{2}_{\infty} = O(\delta^{-2})$.

\subsubsection{Tied weights case}

The tied weights case is different.
We want to compute
\begin{equation}
\fourmom = \frac{2}{\N}\expect_{\W}\left[\tr\left[((\Id-\W)^{-\tpose}(\Id-\W)^{-1})^{2} \right]\right]
\label{eq:fourmom_inv}
\end{equation}
We first attempt solution via the formal power series
\begin{equation}
\fourmom = \frac{2}{\N}\expect_{\W}\left[\tr\left[\sum_{j, k, l, m = 0}^{\infty}(\W^{j})^{\tpose}\W^{k}(\W^{l})^{\tpose}\W^{m}\right]\right]
\label{eq:fourmom_power}
\end{equation}

In the orthogonal weights case the power series converges for $\V<1$ (Appendix \ref{app:four_mom}):
\begin{equation}
\fourmom = \frac{2}{(1-\V)^{3}}-\frac{1}{(1-\V)^2}
\end{equation}
which scales as $O(\delta^{-3})$ for $\delta\equiv 1-\V$. We immediately see that there
is more variance in the tied weights setting.

However, for random and GOE matrices the power series in
Equation \ref{eq:fourmom_power} diverges with non-zero probability.
This is due to the previously mentioned fluctuations in the largest eigenvalues,
which mean that iteration can diverge
in the random and GOE cases even when $\V<\V_{c}$. Therefore we attempt to compute the right hand side of
Equation \ref{eq:fourmom_inv} directly in the large $\N$ limit.
This can be done using operator-valued free probability theory (Appendix \ref{app:op_free_prob}).

In the random
case we have
\begin{equation}
\lim_{\N\to\infty}\N\fourmom = \frac{2\V^2}{(1-\V)^4}+\frac{4\V}{(1-\V)^3}+\frac{2}{(1-\V)^2}
\end{equation}
in the limit of large $\N$. For $\delta \equiv 1-\V$ we get $O(\delta^{-4})$ for small $\delta$. In the GOE case,
we have
\begin{equation}
\lim_{\N\to\infty}\N\fourmom = \frac{1}{2\V}\left(\frac{1}{(1-4\V)^{-5/2}}-\frac{1}{(1-4\V)^{-3/2}}\right)
\end{equation}
which diverges as $O(\delta^{-2.5})$ for $\delta\equiv 1-4\V$.

\subsubsection{Divergence of the variance}

The most interesting feature of $\fourmom$ is its behavior as $\V$ approaches the critical threshold
$\V_{c}$, where the iterative solving fails. As $\delta\equiv 1-\V/\V_{c}$ approaches $0$, $\fourmom$ diverges. This 
reflects the
increased variability in the outputs - both within a single DEQ and between DEQ models.
This can lead to instability in learning.

In the untied weights
case the divergence goes as $O(\delta^{-2})$ irrespective of the matrix ensemble. However, in the tied weights case we have
different, and worse, behavior ($O(\delta^{-2.5})$ for GOE, $O(\delta^{-3})$ for
orthogonal, and $O(\delta^{-4})$ for i.i.d. random).
For the random and GOE, these behaviors are valid when $\N$ is large
enough that the fluctuations in the largest eigenvalues are small ($\N\gg\delta^{-2/3}$, when Tracy-Widom fluctuations
\cite{tracy_levelspacing_1994} are controlled); there will be even more variability when $\N \ll\delta^{-2/3}$.
This is why the infinite power series solution fails; the direct calculation involving $(\Id-\W)^{-1}$, as
$\N$ goes to infinity, is the equivalent of taking the limit $\lim_{t\to\infty}\lim_{\N\to\infty} \fourmom[\z_{t}]$.

We can confirm these results numerically by computing $\tr[[(\Id-\sqrt{\V}\W)^{\tpose}(\Id-\sqrt{\V}\W)]^{-2}]$ for various $\V$ for a fixed
$\W$, drawn separately for each of the distributions. We plot the trace against
$\delta$
(Figure \ref{fig:four_mom}).
\begin{figure}[h]
\centering
\includegraphics[width=0.8\linewidth]{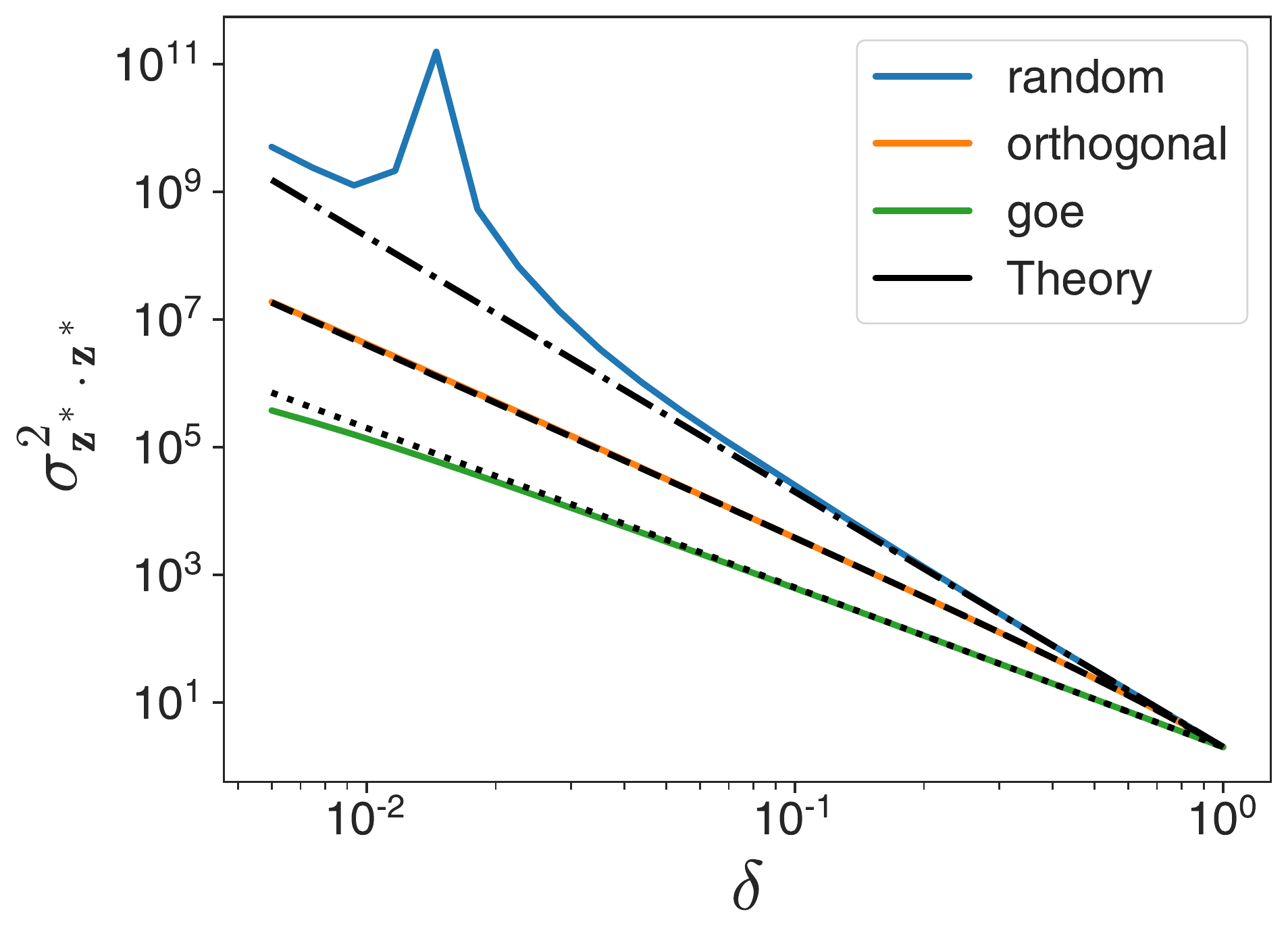}
\caption{ Empirical length variance $\fourmom$
computed using $\tr[[(\Id-\sqrt{\V}\W)^{\tpose}(\Id-\sqrt{\V}\W)]^{-2}]$
for $\N = 5000$. Plotted against $\delta = 1-\V/\V_{c}$ ($\V_{c} = 1/4$ for GOE and $1$ otherwise). Orthogonal is well predicted by theory, which also predicts intermediate
scale behavior for GOE and i.i.d. random. Random shows the most variance, including non-monotonic behavior due to
eigenvalue fluctuations.
}
\label{fig:four_mom}
\end{figure}

As expected, for the orthogonal $\W$
the trace follows its theoretical distribution due to self-averaging in large dimensions.
We also see that the GOE has a smaller trace than the orthogonal, but begins
to deviate from the infinite-width limit for small $\delta$. Finally we see that the
random case tends to have larger variability than either the random or orthogonal.
The non-monotonicity in this example is
associated with the eigenvalue with largest real part approaching $1$, even when $\V<\V_{c}$, due to finite-size
fluctuations.

This analysis suggests that in practice, the orthogonal and GOE ensembles
give more stable performance when initialized near the transition.
The orthogonal ensemble is guaranteed to converge when $\V <1$.
While the GOE ensemble
is also affected by fluctuations which can cause eigenvalues
larger than $1$, the smaller density of states at the edge of the
distribution means it is more stable to fluctuations than the random ensemble.

\section{Theory of non-linear DEQs}

\label{sec:nonlin_deq}

\subsection{Basic model and statistics}

Many of the ideas from the linear DEQ are reflected in the
non-linear setting as well. We consider the non-linear DEQ:
\begin{equation}
\z^* = \phi(\W\z^*)+\x
\label{eq:non-lin-deq-basic}
\end{equation}
where $\phi$ is an elementwise non-linearity. In order to understand the
statistics in the wide network limit, it is useful to instead study
$\h^* = \W\z^*$ defined by
\begin{equation}
\h^* = \W\phi(\h^*)+\W\x
\label{eq:non-lin-deq-h}
\end{equation}
As with the linear DEQ, $\h^*$ can be found as the limit of the iterated map
\begin{equation}
\h_{t+1} = \W\phi(\h_{t})+\W\x
\label{eq:h_iter}
\end{equation}

For random $\W$, the iterated map
implied by Equation \ref{eq:h_iter}
limits to a Gaussian process for a fixed number
of iterations in the large $\N$ limit \cite{feng_neural_2020, yang_tensor_2021}.
It remains an open question to show that
as the number of iterations increases, there is a limiting GP whose properties can be
computed by solving a fixed point equation for the kernel. If such a limiting kernel exists,
it can be solved for using the methods from \cite{feng_neural_2020}.

Following previous work on mean field theory of neural networks
\cite{poole_exponential_2016, schoenholz_deep_2017, feng_neural_2020, lee_wide_2019}, we study
$\sigh \equiv \expect_{\W}\left[\frac{1}{\N}\h^*\cdot\h^*\right]$ in the wide network limit.
In the GP limit, $\sigh = \Var[\h^*_{i}]$;
however, we can study $\sigh$ explicitly for large $\N$ even when the final limiting
distribution is unknown.

We note that $\sigh$ obeys the following self-consistent
equation:
\begin{equation}
\sigh = \expect_{\W}\left[\frac{1}{\N}(\phi(\h^*)+\x)^{\tpose}\W^{\tpose}\W(\phi(\h^*)+\x)\right]
\label{eq:sigh_eq}
\end{equation}
In order to solve for $\sigh$, we need to make some assumptions about $\h^*$, $\x$, and $\W$.
We can prove the following:

\begin{theorem}
\label{thm:sigh_thm}
Suppose $\phi(\h^*)$, $\x$, and $\W$ are freely independent, and
suppose that the distribution of $\W$ is rotationally symmetric.
Then we have:
\begin{equation}
\sigh = \V\left(\sigphi+2C_{\x, \phi(\h^*)}+\sigx^2\right)
\label{eq:sigh_free}
\end{equation}
where $\V = \expect[\tr[\W\W^{\tpose}]]$, $\sigx^2 \equiv \frac{1}{\N}\x\cdot\x$ and
\begin{equation}
\sigphi \equiv \expect[\phi(\h^*_{i})^2], ~ C_{\x, \phi(\h^*)} \equiv \left(\frac{1}{\N}\m{1}^{\tpose}\x\right) \expect[\phi(\h^*_{i})]
\end{equation}
If $\W$ is a random orthogonal matrix (times a fixed scale),
then Equation \ref{eq:sigh_free} holds as long as
$\phi(\h^*)$ and $\x$ are freely independent.
\end{theorem}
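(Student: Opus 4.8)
The plan is to start from the self-consistent equation~\eqref{eq:sigh_eq}, expand the quadratic form, and reduce the resulting expression to a product of a normalized trace of $\W^{\tpose}\W$ and a squared norm, using rotational symmetry to average over the eigenbasis of $\W^{\tpose}\W$ and free independence to decouple $\h^*$ from $\W$.

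Write $\u \equiv \phi(\h^*)+\x$, so that the right-hand side of~\eqref{eq:sigh_eq} is $\expect_{\W}[\frac{1}{\N}\u^{\tpose}\W^{\tpose}\W\u]$. First I would use rotational symmetry of the law of $\W$: then $\W^{\tpose}\W$ is invariant in law under conjugation by orthogonal matrices, so we may write $\W^{\tpose}\W = \m{O}\DD\m{O}^{\tpose}$ with $\m{O}$ Haar-distributed and independent of the eigenvalue matrix $\DD$, and --- by the assumed free independence --- asymptotically independent of $\u$ as well. Averaging over $\m{O}$ with $\u$ and $\DD$ held fixed gives $\expect_{\m{O}}[\frac{1}{\N}\u^{\tpose}\m{O}\DD\m{O}^{\tpose}\u] = (\frac{1}{\N}\u\cdot\u)\,\tr[\W^{\tpose}\W]$, since $\m{O}^{\tpose}\u$ is $\|\u\|$ times a uniform unit vector. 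A second appeal to freeness (self-averaging of each factor) then yields $\sigh = \V\,\expect[\frac{1}{\N}\u\cdot\u]$ with $\V = \expect[\tr[\W\W^{\tpose}]]$.

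It remains to expand $\frac{1}{\N}\u\cdot\u = \frac{1}{\N}\phi(\h^*)\cdot\phi(\h^*) + \frac{2}{\N}\x\cdot\phi(\h^*) + \frac{1}{\N}\x\cdot\x$. The first and third terms self-average to $\sigphi$ and $\sigx^2$ respectively. For the cross term I would decompose $\phi(\h^*) = \expect[\phi(\h^*_i)]\,\m{1} + \psi$ into its mean and centered parts; free independence of $\x$ and $\phi(\h^*)$ forces $\frac{1}{\N}\x\cdot\psi\to 0$, so $\frac{1}{\N}\x\cdot\phi(\h^*)\to(\frac{1}{\N}\m{1}^{\tpose}\x)\,\expect[\phi(\h^*_i)] = C_{\x,\phi(\h^*)}$. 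Collecting the three contributions gives~\eqref{eq:sigh_free}. For the orthogonal case, $\W^{\tpose}\W = \V\Id$ identically, so the $\m{O}$-averaging step is vacuous and $\frac{1}{\N}\u^{\tpose}\W^{\tpose}\W\u = \V\,\frac{1}{\N}\u\cdot\u$ exactly; only the cross-term decomposition is then needed, which is why free independence of $\phi(\h^*)$ and $\x$ alone suffices.

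The hard part will be justifying the two freeness-based decouplings: that $\h^*$ --- defined implicitly through a fixed-point equation that itself involves $\W$ --- behaves in the $\N\to\infty$ limit as though freely independent of $\W$, so that the mixed normalized traces factorize, and that the centered fluctuations of $\phi(\h^*)$ are asymptotically orthogonal to $\x$. Since these are exactly the standing hypotheses, the remaining work is to make precise the topology of convergence under which the Haar average and the factorization are valid, and to establish concentration of $\frac{1}{\N}\phi(\h^*)\cdot\phi(\h^*)$ and $\frac{1}{\N}\m{1}^{\tpose}\phi(\h^*)$ around their means.
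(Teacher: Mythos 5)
Your proposal is correct and follows essentially the same route as the paper's proof: factor $\expect\left[\frac{1}{\N}(\phi(\h^*)+\x)^{\tpose}\W^{\tpose}\W(\phi(\h^*)+\x)\right]$ into $\V$ times the expected squared norm using freeness (your explicit Haar-conjugation average of $\W^{\tpose}\W$ is just a more detailed justification of the one-line factorization the paper invokes), then expand the square and use free independence of $\x$ and $\phi(\h^*)$ to reduce the cross term to $C_{\x,\phi(\h^*)}$, with the orthogonal case handled identically via $\W^{\tpose}\W=\V\Id$.
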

\xlc{
If $W$ is iid Gaussian, I think the free independence between $\phi(h^*) $ and $W$ can be dropped using a technique called Gaussian conditioning. Here is a sketch. Let $y=\phi(h^*)$. Let $a=Wy$. We can then decompose $W$ into a sum of $W=W^\perp + W_y$, where $W_y$ is a pseudoinverse $W_y=ay^{\dagger}$ and $W^\perp =\tilde W P_{y^\perp}$ and $\tilde W $ is a fresh independent copy of $W$; $P_{y^\perp}$ is orthogonal projection onto $y^\perp$.}
\aga{Noted, but will leave for later work!}
\begin{proof}
Using free independence of $\W$ with respect to the other variables we have
\begin{equation}
\sigh = \expect_{\W}\left[\tr\left[\W\W^{\tpose}\right]\right]\expect_{\W}\left[\frac{1}{\N}||\phi(\h^*)+\x||^{2}\right]
\label{eq:raw_sigh}
\end{equation}
which evaluates to
\begin{equation}
\sigh = \V\left(\expect_{\W}\left[\tr\left[ (\phi(\h^*)\phi(\h^*)^{\tpose}+2\phi(\h^*)\x^{\tpose}+\x\x^{\tpose}\right]\right]\right)
\label{eq:simplified_sigh}
\end{equation}
Using the independent of $\x$ and $\phi(\h^*)$, we arrive at
Equation \ref{eq:sigh_free}.
If $\W$ is a random orthogonal matrix, then Equation \ref{eq:simplified_sigh}
follows directly from Equation \ref{eq:sigh_eq}, and the rest of the
analysis holds.

\end{proof}

In Figure \ref{fig:h_var}, we iteratively solve for $\h^*$ for $\W$ using the different matrix families with
different scales. We then numerically solve Equation \ref{eq:sigh_free}. Comparing the
theoretical $\sigh$ with the empirical $\frac{1}{\N}\h^*\cdot\h^*$ (for $\N=1000$), we see
that for the random and orthogonal families, the empirical and theoretical quantities
are well matched.
The agreement in the orthogonal case
is not surprising, as the $\W^{\tpose}\W$
in Equation \ref{eq:raw_sigh} evaluates to the identity matrix.

\begin{figure}[h]
\centering
\includegraphics[width=0.8\linewidth]{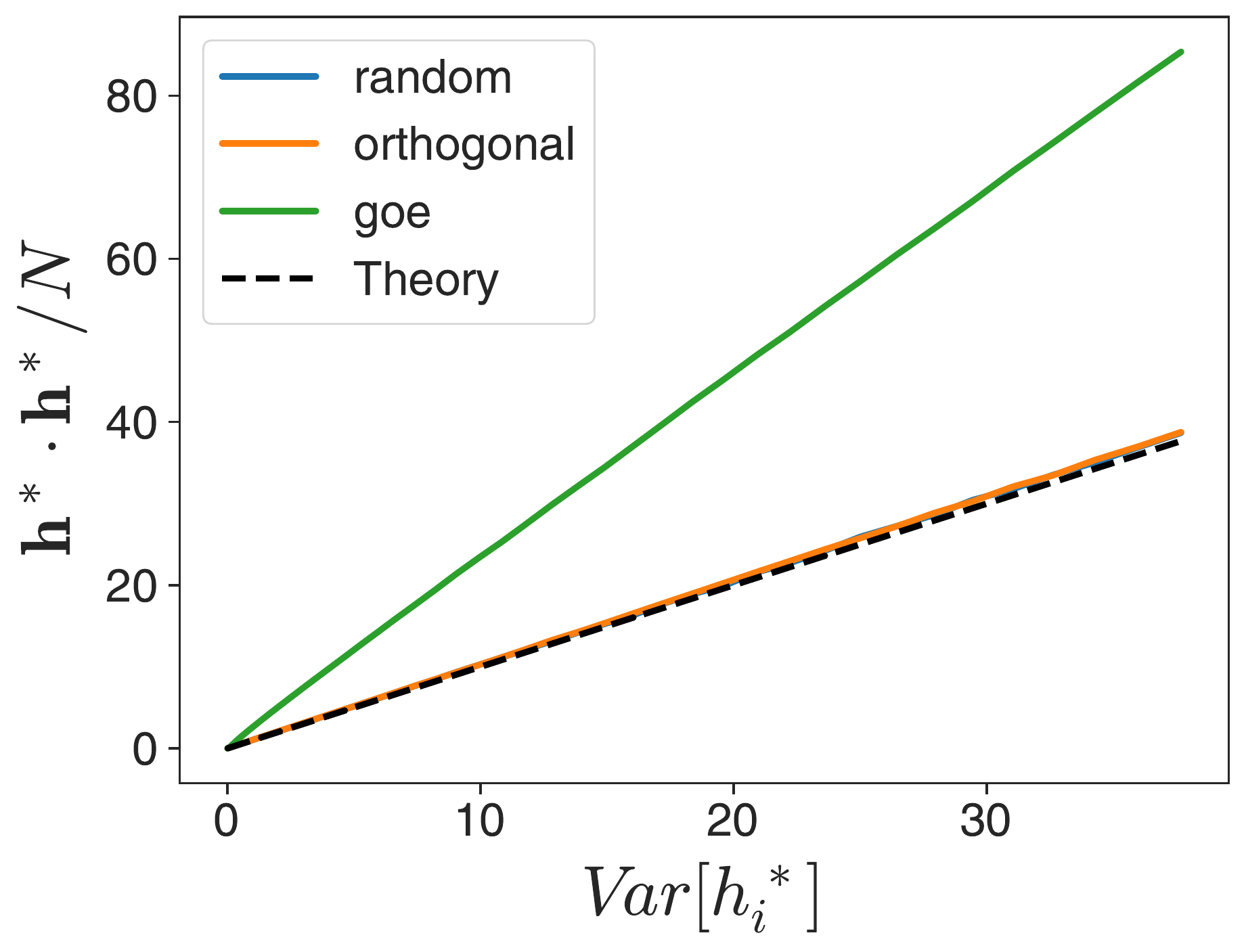}
\caption{Empirical $\frac{1}{\N}\h^*\cdot\h^*$ ($\N = 1000$)
versus theoretical $\Var[(\h_i^*)^2]$
for $\phi$ hard-$\tanh$. Random and orthogonal cases coincide and are well-predicted
by the assumption of $\h$ and $\W$ while GOE variance is larger.}
\label{fig:h_var}
\end{figure}

However, for the GOE, the actual variance is higher than the predicted one.
This suggests that $\phi(\h^*)$, $\x$, and $\W$ are not freely independent in the symmetric case.
In the linear case this was due to the
fact that the left and right singular vectors of
$\W$ are identical in the GOE case. Here
the presence of the non-linearity $\phi$
prevents the application of the linear theory,
but the difference between the GOE and
the other ensembles persists.

\subsection{Fixed point stability}

For non-linear DEQs, there is no general analytic solution for $\z^*$
or $\h^*$. However, one can still understand the stability of fixed
points of the iterative map of Equation \ref{eq:h_iter}.
Linearizing the dynamics in $\delta\h_{t}\equiv \h_{t}-\h^*$,
for small differences we have:
\begin{equation}
\delta\h_{t+1} = \W\circ\phi'(\h^*)\delta\h_{t}+\mathcal{O}(\delta\h^2)
\end{equation}
where $\circ \phi'(\h^*)$ represents the Hadamard product - 
multiplication by a diagonal matrix with entries $\phi'(\h^*)$.

The fixed point is stable under iteration if and only if
$\W\circ \phi'(\h^*)$ has eigenvalues with absolute value less than $1$.
We expect that the self-consistent Equation \ref{eq:sigh_eq}  to hold
near any fixed point. Indeed, Equation \ref{eq:sigh_eq} also describes
the situation where there is no stable fixed point but the statistics of
iteration converge. Therefore, by computing the empirical limiting value
of $\frac{1}{\N}\h^*\cdot\h^*$, we can predict
the existence of a stable fixed point.
If the statistics of $\h^*$ are consistent, then all such fixed points
will be stable.

We must posit a relationship between $\W$ and $\phi'(\h^*)$ to
compute the maximum eigenvalue.
For random and orthogonal $\W$, we have the following theorem:
\begin{theorem}
Let $\W$ be an $\N\times\N$ matrix and $\h^*$ be
an $\N\times 1$ dimensional vector. Let $\W$ and $\phi'(\h^*)$
be freely independent. If $\W$
is a random Gaussian matrix or a random orthogonal
matrix, as $\N\to\infty$ the spectral radius
$r$ of $\W\circ \phi'(\h^*)$ is
given by
\begin{equation}
r^2 = \expect_{\W}[\tr[\W^{\tpose}\W]]\expect\left[\frac{1}{\N}\phi'(\h^*)\cdot \phi'(\h^*)\right]
\end{equation}
\end{theorem}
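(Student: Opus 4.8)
The plan is to recognize the matrix $M \coloneqq \W\circ\phi'(\h^*) = \W\DD$, with $\DD \coloneqq \mathrm{diag}(\phi'(\h^*))$, as an \emph{$R$-diagonal} element in the $\N\to\infty$ limiting operator algebra, and then read off its spectral radius from the Haagerup--Larsen description of the Brown measure of $R$-diagonal elements --- the same free-probability toolbox used for $\fourmom$ in Appendix~\ref{app:op_free_prob}. The final step, converting a statement about the Brown measure into one about the honest spectral radius of the finite matrix, is where the real work lies.

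\textbf{Step 1 ($M$ is $R$-diagonal).} A scaled Haar-orthogonal $\W = \sqrt{\V}\,\m{O}$ and an i.i.d.\ Gaussian $\W$ (entries of variance $\V/\N$) are both bi-orthogonally invariant, so in the polar decomposition $\W = \m{O}'|\W|$ the factor $\m{O}'$ is Haar and (asymptotically) free from $|\W|$; hence $\W$ is asymptotically $R$-diagonal. Writing $\DD = \m{S}|\DD|$ with $\m{S}$ the diagonal of signs, and using that an $R$-diagonal element multiplied by a freely independent element is again $R$-diagonal, $M = \W\DD$ is $R$-diagonal. (For the orthogonal case this is immediate: $M = \sqrt{\V}(\m{O}\m{S})|\DD|$ with $\m{O}\m{S}$ Haar and free from $|\DD|$, by rotational symmetry of $\W$.)

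\textbf{Step 2 (outer radius).} By the Haagerup--Larsen theorem the Brown measure of an $R$-diagonal $M$ is rotationally symmetric and supported on an annulus whose outer radius is $\tr[M^{\tpose}M]^{1/2}$ (the inner radius is $0$ whenever $\DD$ is singular, covering saturating nonlinearities). So the spectral radius equals $\tr[M^{\tpose}M]^{1/2}$, and free independence of $\W$ and $\DD$ gives $\tr[M^{\tpose}M] = \tr[\DD^2\,\W^{\tpose}\W] = \tr[\DD^2]\,\tr[\W^{\tpose}\W]$; identifying $\tr[\DD^2] = \lim_{\N\to\infty}\frac1\N\phi'(\h^*)\cdot\phi'(\h^*)$ and $\tr[\W^{\tpose}\W] = \lim_{\N\to\infty}\expect_{\W}[\tr[\W^{\tpose}\W]] = \V$ yields the claimed $r^2$.

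\textbf{Step 3 (honest spectral radius; the hard part).} The Brown measure is only the limit of the \emph{empirical} spectral distribution, which does not pin down the largest eigenvalue modulus, so one must rule out outlier eigenvalues. I would do this by Girko hermitization: bound below, with high probability and uniformly over $|z| \ge r + \varepsilon$, the least singular value of $M - z\Id$, so that $\det(M-z\Id)\ne 0$ there. This least-singular-value estimate --- genuinely stronger than the crude operator-norm bound $\|M\|_{\rm op}\le \sqrt{\V}\max_i|\phi'(\h^*_i)|$ --- is the one real obstacle; for the Ginibre and Haar-orthogonal ensembles the needed ``no eigenvalues outside the support'' statement is available from the single-ring and products-of-random-matrices literature. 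The matching lower bound $r \ge \tr[M^{\tpose}M]^{1/2} - o(1)$ then follows from weak convergence of the ESD to the rotationally symmetric Brown measure.
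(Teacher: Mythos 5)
Your proposal follows essentially the same route as the paper's proof: establish that $\W\circ\phi'(\h^*)$ is $R$-diagonal by exhibiting a decomposition with a Haar factor free from the remaining factor (the paper uses the SVD of Gaussian $\W$, you use bi-invariance plus the closure of $R$-diagonality under multiplication by free elements, which is equivalent), then read off the outer radius $\sqrt{\tr[(\W\circ\phi'(\h^*))^{\tpose}(\W\circ\phi'(\h^*))]}$ from the Haagerup--Larsen annulus and factor the trace using freeness of $\W$ and $\phi'(\h^*)$. Your Step 3, converting the Brown-measure support into a statement about the actual largest eigenvalue by excluding outliers, is a gap the paper's proof passes over silently, so it is added care rather than a different approach.
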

\begin{proof}
In order to prove the theorem, we will prove that $\W\circ \phi'(\h^*)$ are
\emph{$R$-diagonal} operators - which have rotationally symmetric
empirical spectra in the complex plane with known maximum radius \cite{ mingo_free_2017}.
We will use the characterization from \cite{nica_diagonal_1996},
where an element $\m{R}$ is $R$-diagonal if it obeys a polar decomposition
\begin{equation}
\m{R} = \m{U}\m{P}
\end{equation}
where the sets $\{\m{U}, \m{U}^\dagger\}$ and $\{\m{P}, \m{P}^\dagger\}$ are freely independent, and
$\m{U}$ is Haar distributed. We note that $\m{R}$ has a Brown measure with support
given by an annulus in the complex plane centered at zero, with maximum radius
$r^2 = \expect[\tr[\m{R}^{\dagger}\m{R}]]$.

It remains to show that $\W\circ\phi'(\h^*)$ is $\m{R}$-diagonal. If $\W$ is orthogonal,
we already have a polar decomposition. If $\W$ is a Gaussian random matrix, we can use
the singular value decomposition $\W = \m{U}\sm{\sigma}\m{V}$. We have
$\m{U} = \m{U}$ and $\m{P} = \sm{\sigma}\m{V}\circ\phi'(\h^*)$. We immediately
have that $\{\m{U}, \m{U}^\dagger\}$ and $\{\m{P}, \m{P}^\dagger\}$ are
freely independent, and $\m{R}$ is $R$-diagonal.

Computing the trace of $\m{R}^{\tpose}\m{R}$ completes the proof.
\end{proof}

The GOE case is more complicated even when $\phi'(\h^*)$ and $\W$
are freely independent. Progress can be made if $\phi$ is monotonic. In this case,
$\phi'(\h^*)$ is non-negative, and $\W\circ\phi'(\h^*)$ has the same spectrum
as the symmetric matrix $\phi'(\h^*)^{1/2}\circ\W\circ\phi'(\h^*)^{1/2}$.
Free probability theory can then be used to numerically compute the
spectral radius of $\W\circ\phi'(\h^*)$
\cite{nica_diagonal_1996, mingo_free_2017}.

In certain cases, we can compute the spectral radius analytically.
If $\phi$ is the hard-tanh function, then the distribution is a linear combination
of a $\delta$-function at $0$ and a semicircular law (Appendix \ref{app:diag_W}). The spectral
radius is given by $2\sqrt{\V p(\h^*)}$, where $p(\h^*)$ is the probability
that $|\h^*_{i}|<1$ for $\h^*_{i}$ Gaussian distributed with some variance
$\sigh$ and mean $0$. Up to a factor of $2$ this is equivalent
to the radius for freely-independent orthogonal and random $\W$.

\begin{figure}[h]
\centering
\includegraphics[width=0.8\linewidth]{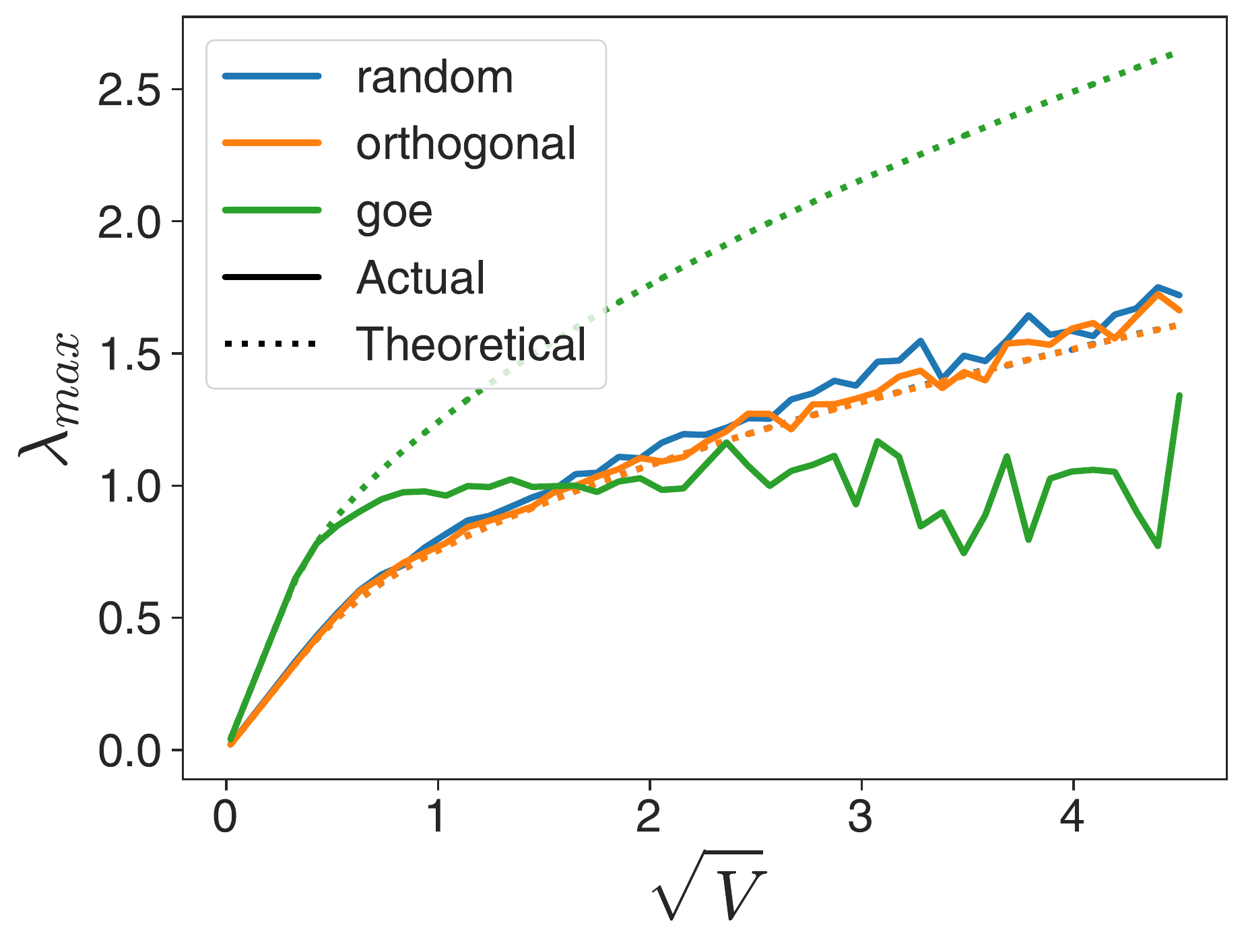}
\caption{Theoretical versus actual $\lambda_{max}$ for DEQ with hard-$\tanh$
non-linearity. Free-probability theory predicts random and orthogonal $\lambda_{max}$
for $\lambda_{max}\leq 1$. GOE is well predicted for small $\sqrt{\V}$ but not
near the transition.}
\label{fig:lamb_max}
\end{figure}

We can compare these theoretically-predicted stability conditions with
stability in practice. Given a randomly initialized DEQ, we can compute the
largest eigenvalue (by absolute value) and compare it to the theoretical
prediction for different values of $\sqrt{\V}$ (Figure \ref{fig:lamb_max}). We see that
for small $\V$, all matrix families are well-described by the theory. However,
in the symmetric case we see deviations at intermediate $\V$ as well.
As $\lambda_{max}$ approaches $1$ we see more deviations; finally,
when $\lambda_{max} > 1$ there are large fluctuations in the maximum eigenvalue.

For the random and orthogonal cases, the theoretical curves can be
used to predict the transition from a stable fixed point to no stable fixed point.
One way to detect convergence is to plot the residual
$||\h_{t+1}-\h_{t}||$ for some large $t$ after forward iteration of the DEQ map.
For all matrix families, this residual goes from near-zero to a non-zero
value as $\V$ increases (Figure \ref{fig:h_diff}, averaged over $1000$ samples).
We can predict the critical $\sqrt{\V}$ using the theoretical predictions for
$\lambda_{max}$. For the orthogonal and random cases, we see that
the transition is well-predicted by the theory, while in the symmetric case
the theory predicts a transition earlier than the actual transition.

\begin{figure}[h]
\centering
\includegraphics[width=0.8\linewidth]{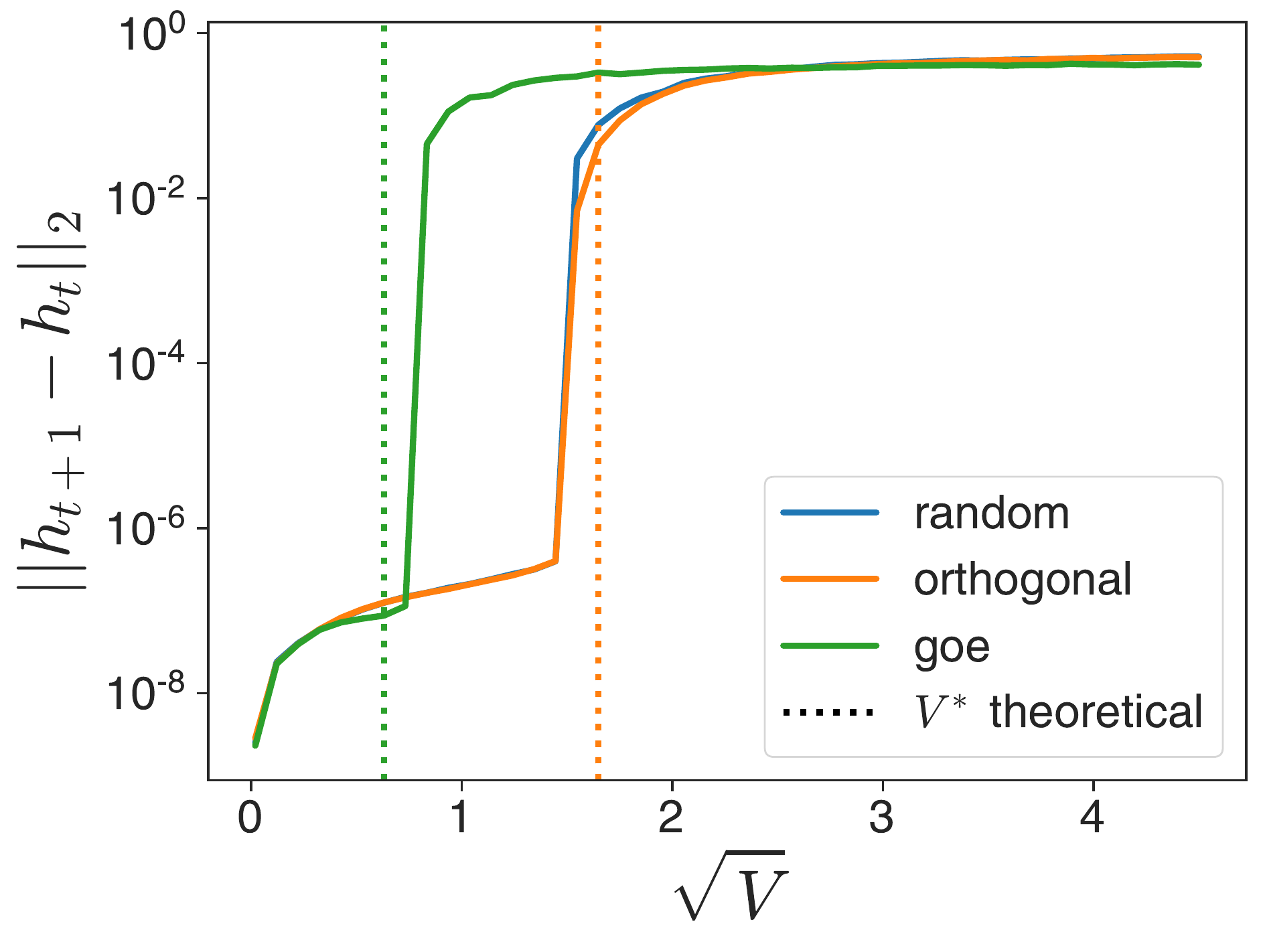}
\caption{$||\h_{t+1}-\h_{t}||$ for DEQs with hard-$\tanh$ non-linearity
($\N = 1000$, averaged over $1000$ samples). For large $\sqrt{\V}$, DEQ doesn't
converge. Divergence threshold is well predicted for random and
orthogonal, but more variance for GOE ensemble.}
\label{fig:h_diff}
\end{figure}

Even in the non-linear case the random and orthogonal
families have similar large-width behavior (which is predictive of some finite
width properties), but the symmetric matrix has different properties due
to its normality.

\section{Experiments}

The theoretical analysis has shown that the different matrix families
have different properties in both the wide network and finite width limits.
In particular, the orthogonal and GOE ensembles often have less
variability than the random ensemble. In this section,
we empirically explore the effects of initializing practical models
with different matrix families. We start by showing that orthogonal
initializations increase stability and performance for DEQs trained on MNIST.
We then show that for DEQ transformers,
using orthogonal or GOE ensembles increases the stability,
and sometimes even the speed of learning compared to i.i.d.
random matrices.

\subsection{Fully connected DEQ on MNIST}

We begin by training a fully-connected DEQ on MNIST
\cite{lecun_backpropagation_1989}. The network consists of a flattening
layer, followed by a ResNet DEQ layer with $\tanh$ non-linearity, and finally
a dense layer to project to $10$ logits.

We initialize the DEQ layer with random and orthogonal weight matrices
at different scales. Learning rate tuning of an ADAM optimizer with momentum
of $0.9$ suggested an optimal learning rate of $10^{-2}$ for all the conditions
studied. We then trained networks to convergence with $10$ random seeds for each
initialization family-scale pair.

For small scales,
the test error is similar for both types of initializations (Figure \ref{fig:mnist_error}).
However, for larger scales, the orthogonal initialization obtains lower test error.
The learning for the random initialization is less stable, as evidenced by a gap between the
median and mean test error across the random seeds. This suggests that the orthogonal
initialization increases the volume of hyperparameter space where DEQs are viable.

\begin{figure}[h]

\centering

\includegraphics[width=0.8\linewidth]{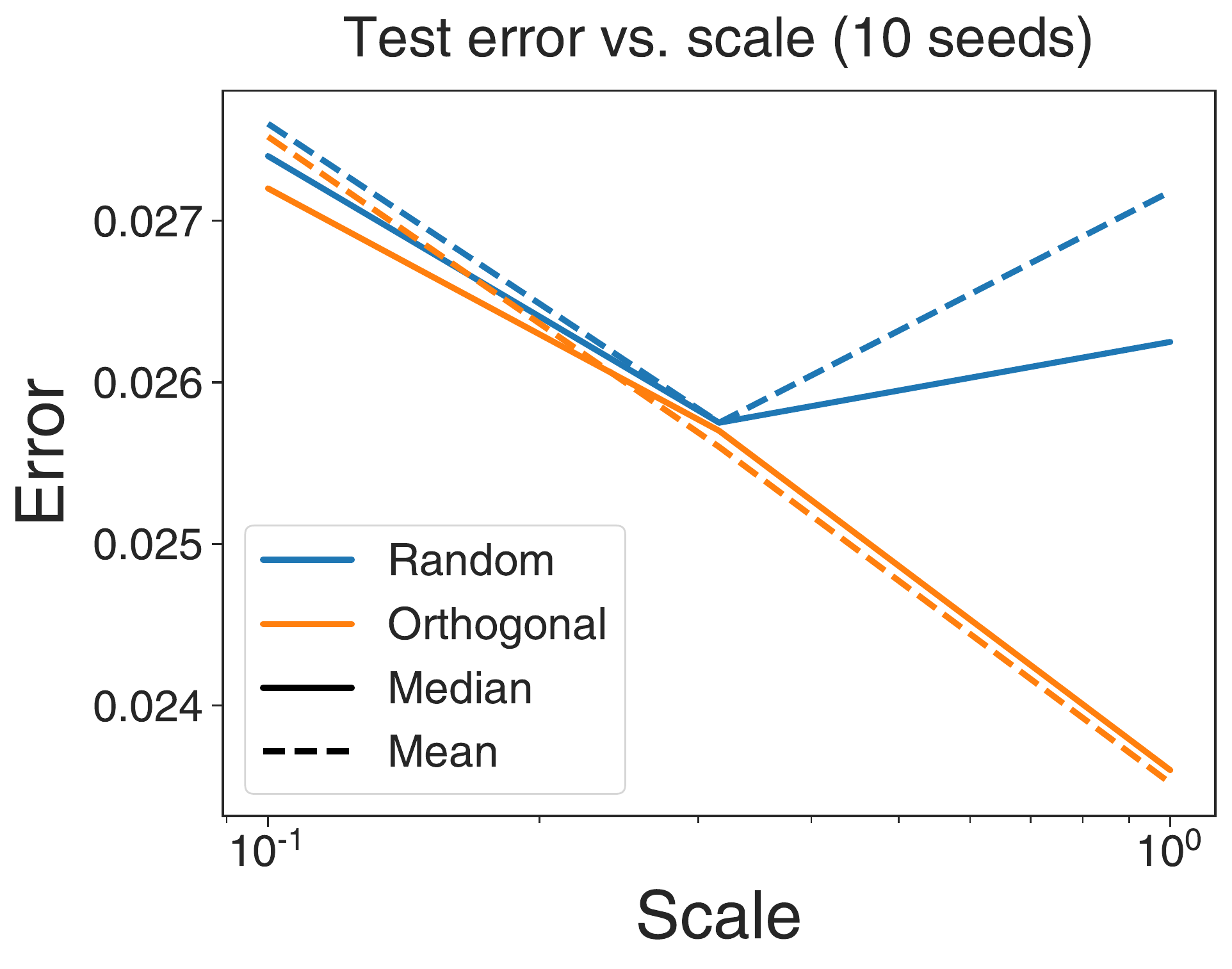}

\caption{Test error for fully connected DEQs trained on MNIST. Learning rate tuning was
performed for each initialization-scale pair, after which statistics were taken over 10 random seeds.
Orthogonal initialization allows for larger initialization scales to train stably, and achieves lower
test error.}

\label{fig:mnist_error}

\end{figure}

\subsection{DEQ transformer}

\begin{figure*}[t]

\centering

\begin{tabular}{ccc}
\includegraphics[width=0.3\linewidth]{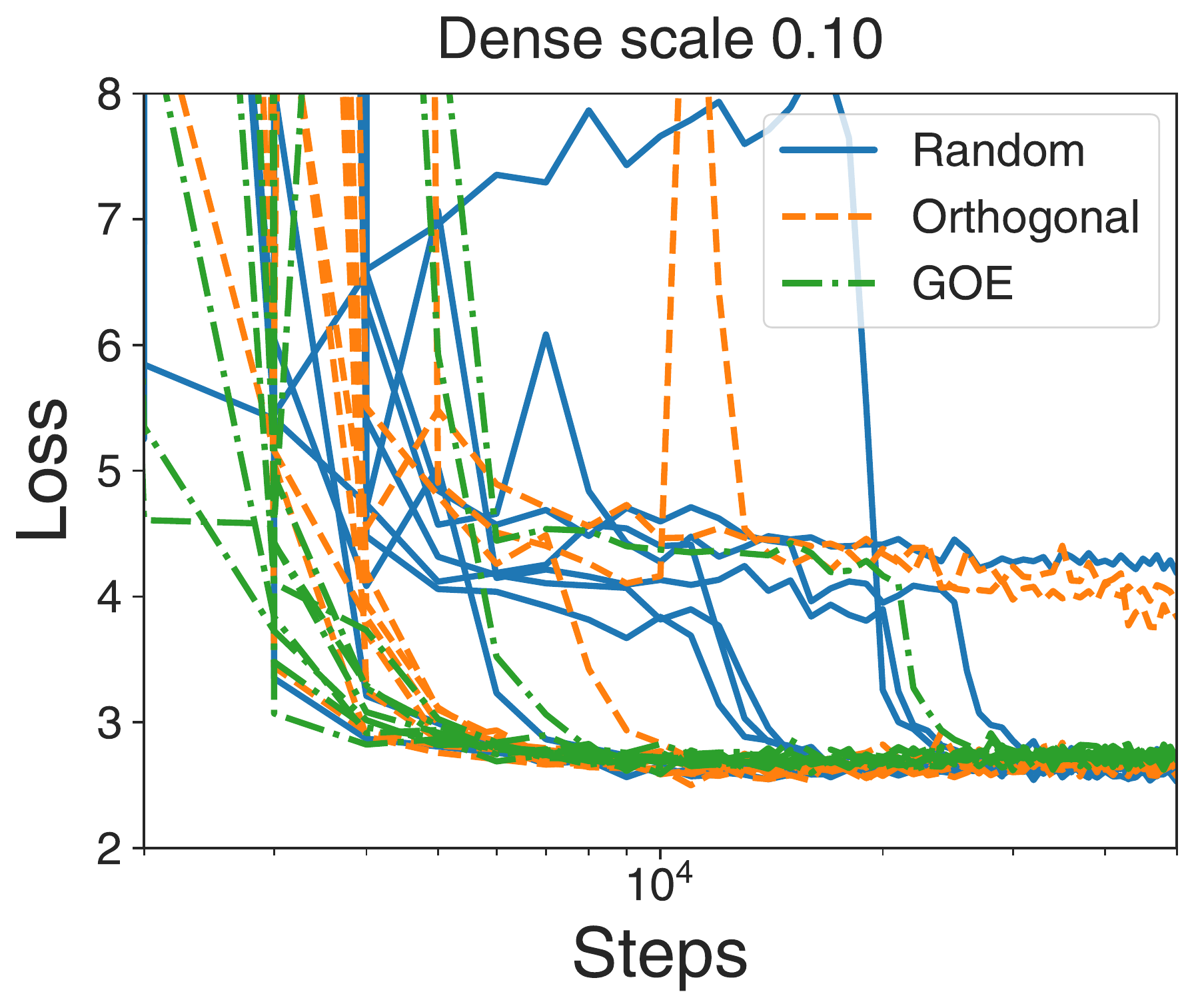} & \includegraphics[width=0.3\linewidth]{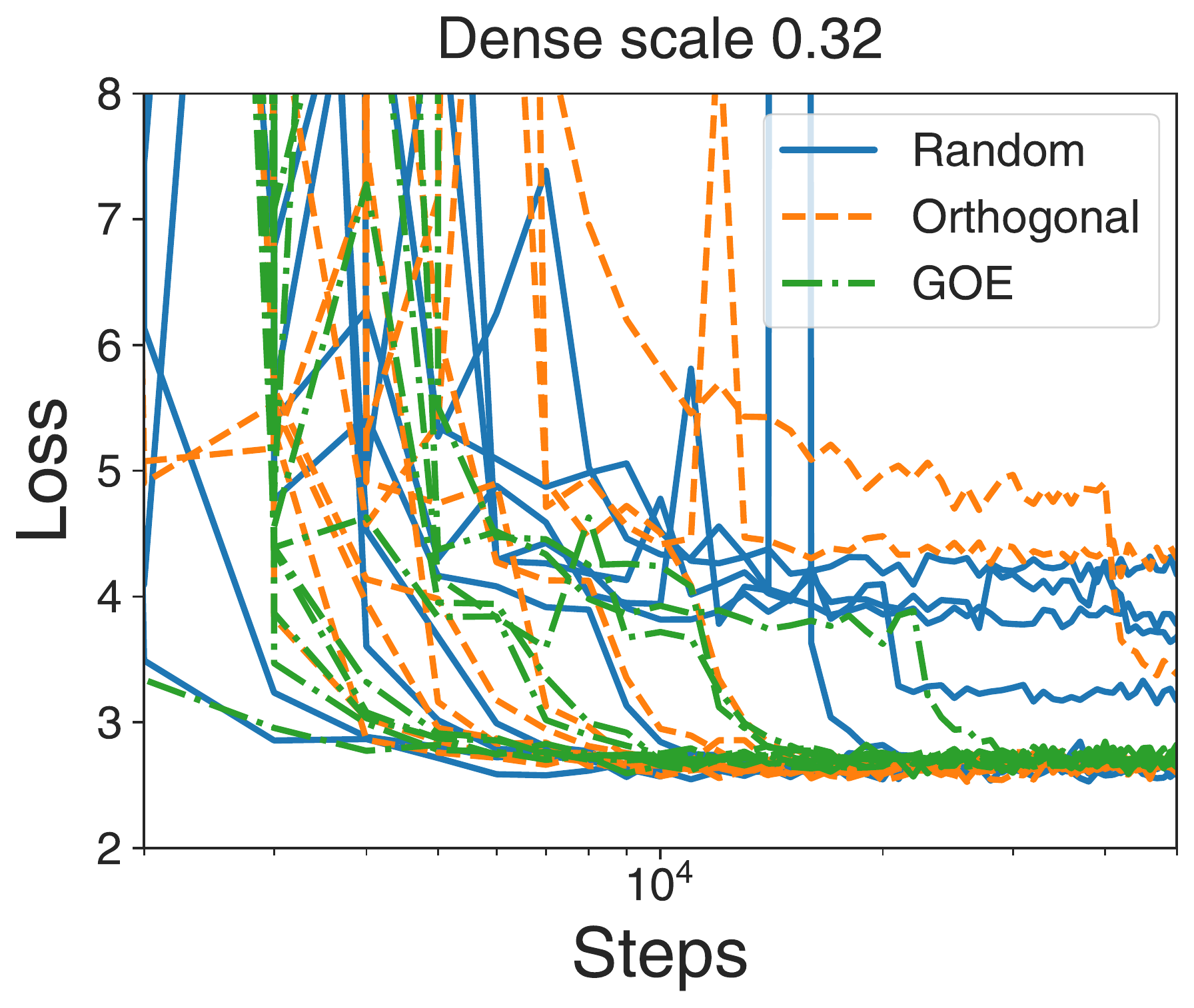} &
\includegraphics[width=0.3\linewidth]{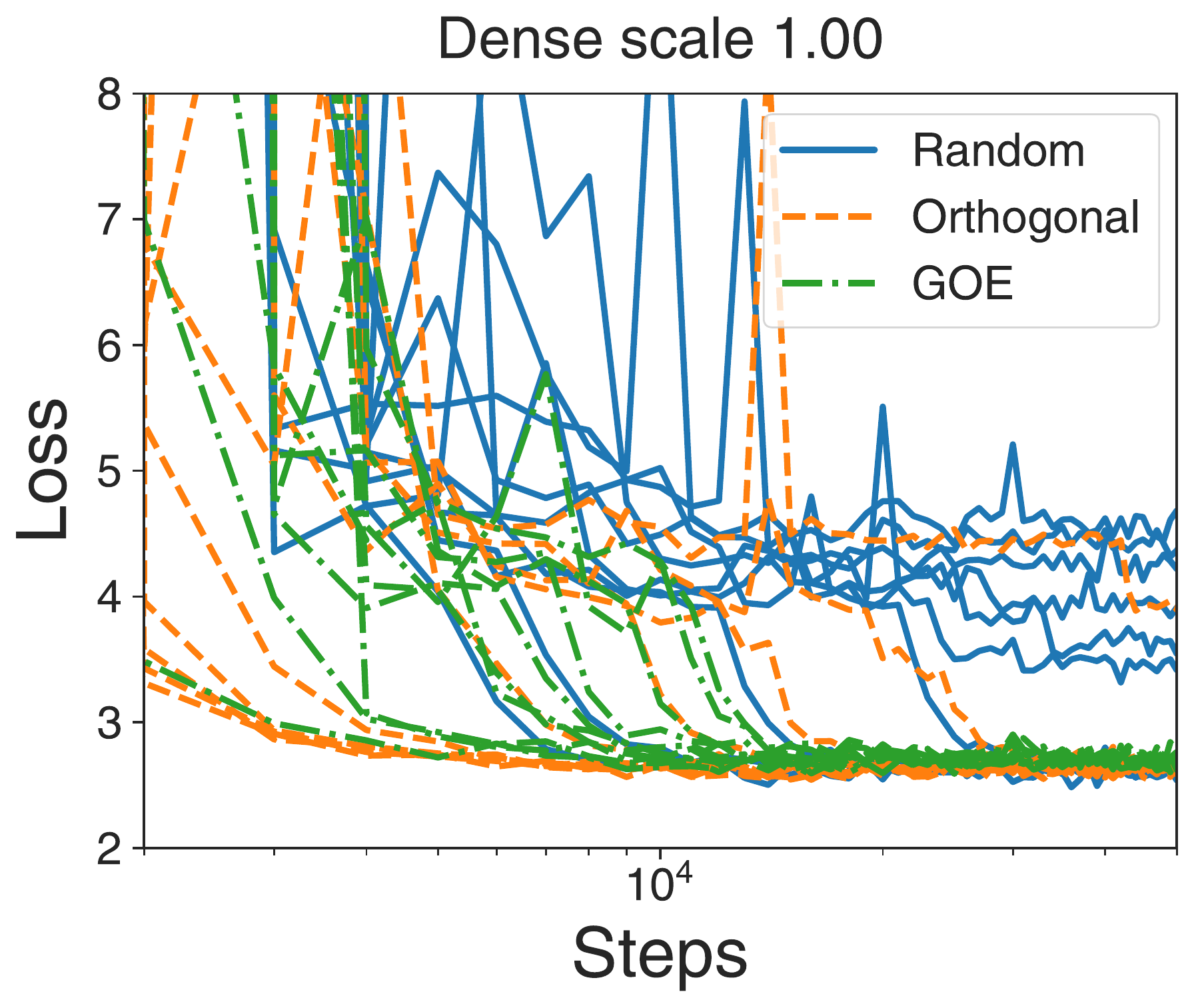}
\end{tabular}

\caption{Test loss for various $\sqrt{\V}$ and initial matrix ensembles, $10$ independent seeds. While random initializations
reach lowest test loss, they have a higher chance of diverging, and generally converge more slowly. GOE and orthogonal
initializations perform better as $\sqrt{\V}$ increases.} 

\label{fig:seed_perplexity}

\end{figure*}

We next examine the effects of the matrix ensembles
on a DEQ using a vanilla transformer layer from
\cite{al-rfou_characterlevel_2019} as
the base of the DEQ layer,  trained on Wikitext103 \cite{merity_pointer_2016}. With this architecture,
there are two main sets of matrices to initialize: the attention matrices and
the dense layers. We focused on the dense layers, as they have the same structure as the 
theoretical calculations.

We modified a Haiku implementation of the DEQ transformer \cite{khan_deq_2020}.
The details of our training procedure can be found in Appendix \ref{app:experiments}. We trained
on TPUv3.

\subsection{Stability and variability of learning}

We trained models with each of three matrix families (random, GOE, and random orthogonal),
with $\sqrt{\V} \in [10^{-1}, 10^{0}]$ over 10 random seeds. We find that the
average test loss is best for the GOE across all $\sqrt{\V}$, and orthogonal
is better than random at large $\sqrt{\V}$
(Table \ref{tab:test_perp}). However, we see that comparing
the best seeds from each family, the GOE performs worst.

\begin{table}[h]
\centering

\caption{Test perplexity on WIkitext103 for different $\sqrt{\V}$ and initial
matrix ensembles. Min and average taken over $10$ random seeds. Random initialization
has best performing models, but less stable learning than GOE and
orthogonal.}

\vskip 0.15in

\begin{sc}

\begin{tabular}{l|cccccc}
\toprule
 & \multicolumn{2}{c}{goe} & \multicolumn{2}{c}{orthogonal} & \multicolumn{2}{c}{random}\\
\midrule
$\sqrt{V}$ & Min & Ave & Min & Ave & Min & Ave \\
\midrule

$0.1$ & $68.1$ & $71.8$ & $60.7$ & $162.7$ & $56.8$ & $153.9$ \\
$0.3$ & $66.1$ & $69.8$ & $60.5$ & $173.4$ & $56.3$ & $224.9$ \\
$1.0$ & $66.3$ & $68.3$ & $57.4$ & $112.1$ & $55.6$ & $481.5$ \\

\bottomrule
\end{tabular}

\end{sc}

\label{tab:test_perp}

\end{table}

The discrepancy between the average and minimum
can be understood by
looking at the learning curves themselves.
For $\sqrt{\V} = 0.1$, where the random family performs best, we see that
many trajectories converge slowly to the equilibrium value (Figure \ref{fig:seed_perplexity}),
and some don't converge at all. In comparison, the orthogonal family
has trajectories which don't converge, but those that do converge more quickly.
All families from the GOE ensemble converge.

The difference between the families is more dramatic for $\sqrt{\V} = 1$.
The random initialization fails to converge to low test loss
very often compared to the orthogonal and GOE ensembles. This suggests that
a broader range of hyperparameters can be stably explored using
non-i.i.d. initializations, and that previously
reported limits of initialization
\cite{bai_deep_2019, bai_multiscale_2020, bai_stabilizing_2021}
may be overcome even without regularization.

This suggests that we can increase the stability of learning by switching to a different matrix family.
The GOE is the most stable, but also has the highest minimum perplexity. The orthogonal
family is a better choice, as it trades off less aggressively between performance and stability.

\section{Conclusions}

\subsection{Differences in the large-width limit}

Our theory and experiments suggested that even in the large width limit, symmetric matrices
behave differently from random and orthogonal matrices. Recent work has shown that
orthogonal and random matrices have similar behavior for finite depth and large
width \cite{huang_neural_2021, martens_validity_2021}; we conjecture
that the same may be true for DEQs.

In the linear DEQ case, we can understand the similarities and differences by noting that,
for random and orthogonal matrices,
$\tr[(\W^{k})^{\tpose}\W^{j}] = \delta_{jk}\tr[\W^{\tpose}\W]^{k}$. For i.i.d. random
matrices, this is due to the fact that the right and left singular vectors of $\W$
are independent; for the orthogonal family, this is due to the fact that the singular
values are all $1$.

However the symmetric basis has identical left and right singular vectors,
so the singular values of matrix powers are not determined solely by the average (squared)
singular value of $\W$ itself. We note that in the untied weights case, this is not an issue;
here instead of spectra of $\W^k$, we care about $\prod_{i=1}^{k}\W_{i}$ for independent
$\W_{i}$, which can be computed using the individual traces.

In general, matrices from \emph{normal} families will behave differently from random i.i.d.
matrices.
With any normal family, the iterated
DEQ map may not display free-independence
layer to layer.
As we saw in Section \ref{sec:nonlin_deq}, this is reflected in the non-linear case as well.

We also saw that for all families, the tied weights case had asymptotically more
finite-width deviations than the equivalent untied weights case as the networks
approached the divergence threshold. As many DEQs
are eventually trained near the unstable regime \cite{bai_deep_2019, bai_stabilizing_2021}, these differences
are relevant for understanding the performance of real DEQs.

\subsection{Practical implications}

DEQs suffer from training instability
\cite{bai_deep_2019, bai_stabilizing_2021}, perhaps more so than
``traditional" networks due to their (implicitly) iterative nature.
One approach is to use alternative parameterizations to induce
Lipschitz bounds with respect to parameters \cite{revay_lipschitz_2020},
or to guarantee convergence independent of parameter values
\cite{kawaguchi_theory_2020, winston_monotone_2021}. Another promising approach
involves
regularization of the Jacobian
($\W\circ\phi'(\h^*)$ in our simple case) to stabilize learning \cite{bai_stabilizing_2021}. 

Our theoretical work suggested a different way to to mitigate this instability
by optimizing over different matrix ensembles.
In the linear, untied weights case (normal deep network), learning dynamics with orthogonal
initializations has been extensively studied
\cite{saxe_exact_2014}, and orthogonal
initializations have been proposed as a method of reducing training instability \cite{schoenholz_deep_2017, hu_provable_2019, xiao_dynamical_2018}.

The orthogonal family provided clear benefits on MNIST, increasing the range
of good initialization scales, and eventually leading to better test performance.
For the vanilla DEQ transformer,
the alternative matrix ensembles provide benefits to training dynamics.
While our experiments didn't
show significant gains to the best performing networks, the increase in stability and
consequently typical training speed was significant. This gives a simple
way to improve training setups, and also seems to allow for a broader range of
hyperparameters
to be explored. With improved tuning, alternate matrix families may be able to
improve on best case performance instead of just average case.

\subsection{Future directions}

Given the success of the GOE ensemble, it may be useful to study
other normal ensembles. Random antisymmetric matrices have been
used to initialize RNN models \cite{chang_antisymmetricrnn_2018}. One can construct a Haar-distributed
random normal family with any eigenvalue distribution of interest,
complex or real.

Our analysis focused on fixed point solution via naive forward iteration,
while in practice (including in our own transformer experiments)
quasi-Newton methods like the Broyden solver with Anderson acceleration
are used to speed up convergence to the fixed point. It may be possible
to analyze these algorithms theoretically as well.

Another possible extension is applying similar methods to other
implicitly defined networks like neural ODEs \cite{chen_neural_2018}, which require
numerical solution of fixed points. Initializing with different matrix ensembles may improve
performance of these methods.


\bibliography{deq_icml_zotero, extra}
\bibliographystyle{icml2022}

\newpage
\appendix
\onecolumn

\section{Linear DEQ theory}

\label{app:linear_deq}

\subsection{Basic definitions}

\label{app:linear_def}

In this section we derive some statistics relevant to the linear DEQ,
defined implicitly by
\begin{equation}
\z^* = \W\z^*+\x
\end{equation}
for an $\N\times \N$ matrix $\W$ and an $\N\times 1$ dimensional vector $\x$.
This equation has solution
\begin{equation}
\z^* = (\Id-\W)^{-1}\x
\end{equation}
for $\W$ without eigenvalues equal to $1$.

We will usually consider $\W$ to be drawn from a random matrix ensemble.
The three ensembles we focus on will be the random Gaussian ensemble,
the Gaussian orthogonal ensemble of random symmetric matrices,
and the Haar-distributed orthogonal matrices. We will consider
$\W = \sqrt{\V}\W_{0}$, where $\W_{0}$ is drawn from these families
and $\V$ is a fixed factor controlling the average squared singular
value of $\W$.

We can think of $\z^*$ as the limit of the iterative map
\begin{equation}
\z_{t+1} = \W\z_{t}+\x
\end{equation}
This map converges if and only if the eigenvalues of $\W$ have magnitude less
than $1$.

We will also be interested in the \emph{untied weights} case where
the iterative equation is given by
\begin{equation}
\z_{t+1} = \W_{t}\z_{t}+\x
\end{equation}
where $\W_{t}$ are independent random matrices
from the same ensemble. Here $\z_{\infty} = \lim_{t\to\infty}\z_{t}$ does
not exist, but statistics like $||\z_{\infty}||^2 = \lim_{t\to\infty}\z_{t}\cdot\z_{t}$
do converge.

\subsection{Infinite depth, infinite width limit}

\label{app:inf_depth_width}

In \cite{feng_neural_2020}, the authors argue that DEQs under finite iteration meet the
conditions for the Tensor Programs (TP) framework
\cite{yang_tensor_2021} to apply, allowing them to use the untied weights calculation to compute
the finite-depth NTK. These results are then used to postulate a fixed point map for the
infinite-depth DEQ.

However, the theory in \cite{yang_tensor_2021} is valid only for programs of finite length. We
show how this limitation can be overcome in the case of linear DEQs with random initialization.
We will focus on the marginal distribution of
the $\z^*_{i}$ for a single input $\x$;
the covariance between pairs of inputs
$\x,~\m{y}$ follows naturally from our
arguments.

The key idea is that $\z_{t}$
better and better approximates $\z^*$, as
both $\N$ and $t$ increase. Since $\z_{t}$
converges to a Gaussian in $\N$, we can find
better and better approximations to $\z^*$
which are more and more Gaussian (with statistics
converging to the desired ones). By
carefully increasing $t$ and $\N$ together,
we show convergence to the desired distribution.

It is helpful to prove a Lemma about the
approximation scheme.
Let $\W$ have i.i.d. Gaussian elements with
variance $\V/\N$.
We begin by computing $||\z^*-\z_{t}||_{2}^{2}/\N$. We have:
\begin{equation}
\frac{1}{\N}||\z^*-\z_{t}||_{2}^{2} = \frac{1}{\N}\x^{\tpose}\left(\sum_{k=t+1}^{\infty}\W^{k}\right)^{\tpose}\left(\sum_{k'=t+1}^{\infty}\W^{k'}\right)\x
\end{equation}
Note that this power series representation is
not guaranteed to exist for finite $\N$;
fluctuations may take the largest eigenvalue
above $1$ even when $\V <1$. However, the 
empirical spectrum of $\W$ converges in probability to the circular law with radius
$\sqrt{\V}$; therefore given any $\delta > 0$
there exists some $\N_{1}$ such that the sum converges with probability
at least $1-\delta/2$ for all $\N > \N_{1}$. 

Having established the convergence of the
power series
representation, 
factoring gives us
\begin{equation}
\frac{1}{\N}||\z^*-\z_{t}||_{2}^{2} = \frac{1}{\N}(\z^*)^{\tpose}(\W^{t+1})^{\tpose}\W^{t+1} \z^*
\end{equation}

As $\N\to\infty$, the empirical distribution of $(\W^{t+1})^{\tpose}\W^{t+1}$ converges
in probability to a distribution with a maximum eigenvalue bounded by $t\V^{t+1}$ \cite{pennington_resurrecting_2017}.
Therefore, for every $\delta > 0$,
there exists an $\N_{2}>\N_{1}$ such that
for all $\N > \N_{2}$, we have
\begin{equation}
\frac{1}{\N}||\z^*-\z_{t}||_{2}^{2} \leq \frac{1}{\N}\z^*\cdot\z^* (t+1)\V^{t+1}
\end{equation}
with probability at least $1-\delta/2$. This means that we have the (loose) bound
\begin{equation}
||(\z^*)_{i}-(\z_{t})_{i}||_{2}^{2} \leq (\z^*\cdot\z^*) (t+1)\V^{t+1}
\end{equation}
with probably at least $1-\delta$.

We can use a similar argument to show that
$\z^*\cdot\z^*$ converges in probability to
$\frac{1}{1-\V} \x\cdot\x$ (limiting value computed in Appendix \ref{app:rand_inv}).
We have the bound
\begin{equation}
|(\z^*)_{i}-(\z_{t})_{i}|^{2} < \frac{2t}{1-\V}(\x\cdot\x)  \V^{t+1}
\label{eq:loose_bound}
\end{equation}
where for any $\delta > 0$ the bound holds with
probability at least $1-\delta$ for all $\N$
with $\N >\N_{3}$ for some $\N_{3}$.

We immediately see that for $\V<1$, the bound
decreases with $t$,
This gives us the following Lemma:
\begin{lemma}
\label{lem:lin_bound}
Let $F_{\N}(z)$ and $F_{\N}(z;t)$ be the
CDFs of $\z^*_{i}$ and $(\z_{t})_{i}$.
Fix $z$.
Then for $\epsilon > 0$, there exist
$\N_{c}$ and $t_{c}$ such that
\begin{equation}
|F_{\N}(z)-F_{\N}(z;t)| <\epsilon
\end{equation}
for $t>t_{c}$, $\N > \N_{c}$.
\end{lemma}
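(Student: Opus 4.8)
The plan is to reduce the statement to two inputs: the high‑probability pointwise estimate of Equation~\ref{eq:loose_bound}, and the finite‑depth fact of \cite{feng_neural_2020, yang_tensor_2021} that, for each fixed $t$, the coordinate $(\z_{t})_{i}$ converges in distribution as $\N\to\infty$ to a Gaussian $G_{t}$ with mean $\x_{i}$ and variance $\frac{1}{\N}(\x\cdot\x)\sum_{k=1}^{t}\V^{k}$. The only feature of $G_{t}$ I need, besides it being atomless, is that this variance is bounded below by some constant $v_{\star}>0$, uniformly over $t\ge 1$ and large $\N$, so that $G_{t}$ has density at most $(2\pi v_{\star})^{-1/2}$; hence any short interval carries little $G_{t}$‑mass. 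That is exactly what converts closeness in probability into closeness of CDFs.

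First I couple. Since $(\z^{*})_{i}$ and $(\z_{t})_{i}$ are built from the same $\W$, Equation~\ref{eq:loose_bound} says that for every $\delta>0$ there is $\N_{3}(\delta)$ so that, for $\N>\N_{3}(\delta)$, the event $\{|(\z^{*})_{i}-(\z_{t})_{i}|\le\eta(t)\}$ has probability at least $1-\delta$, where $\eta(t)^{2}=\frac{2t}{1-\V}(\x\cdot\x)\V^{t+1}$. On that event $\{(\z^{*})_{i}\le z\}\subseteq\{(\z_{t})_{i}\le z+\eta(t)\}$ and vice versa, which gives the sandwich
\begin{equation}
F_{\N}(z-\eta(t);t)-\delta \le F_{\N}(z) \le F_{\N}(z+\eta(t);t)+\delta,
\end{equation}
together with the same bounds with the two CDFs interchanged. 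Combining them yields
\begin{equation}
|F_{\N}(z)-F_{\N}(z;t)| \le \max\big\{\,F_{\N}(z+\eta(t);t)-F_{\N}(z;t),\ \ F_{\N}(z+\eta(t))-F_{\N}(z)\,\big\}+\delta,
\end{equation}
so it remains to bound the mass each of the two CDFs puts on the short interval $(z,z+\eta(t)]$.

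The depth‑$t$ term is handled directly: by the finite‑depth convergence to $G_{t}$, for $\N$ large $F_{\N}(z+\eta(t);t)-F_{\N}(z;t)$ is within $\delta$ of $\Pr(z<G_{t}\le z+\eta(t))\le\eta(t)(2\pi v_{\star})^{-1/2}$. For the $\z^{*}$ term I couple once more to the same depth $t$: on the good event $\{z<(\z^{*})_{i}\le z+\eta(t)\}\subseteq\{(\z_{t})_{i}\in(z-\eta(t),z+2\eta(t)]\}$, so $F_{\N}(z+\eta(t))-F_{\N}(z)\le\Pr((\z_{t})_{i}\in(z-\eta(t),z+2\eta(t)])+\delta$, again controlled via $G_{t}$ by $3\eta(t)(2\pi v_{\star})^{-1/2}$ up to $O(\delta)$. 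Since $\eta(t)\to 0$, one then picks $\delta$, next $t$ large enough that $\eta(t)$ is small, and finally $\N$ large enough for the two coupling events and the two central‑limit approximations at depth $t$ to hold, so that the whole right‑hand side drops below $\epsilon$; taking maxima of these thresholds gives $\N_{c}$ and $t_{c}$. The real obstacle is precisely this coordination, not the (cosmetic) coupling and interval‑mass estimates: $\eta(t)$ is only small once $t$ is large relative to $\log(\x\cdot\x)$, while the Gaussian approximation at depth $t$ needs $\N$ large relative to $t$, so $t_{c}$ and $\N_{c}$ cannot be chosen wholly independently (when $\x\cdot\x$ grows with $\N$, $t_{c}$ must be permitted to depend on $\N$). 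Making this joint schedule of $t$ and $\N$ explicit is where the care lies, and it is exactly the bookkeeping that the subsequent infinite‑depth, infinite‑width argument carries out.
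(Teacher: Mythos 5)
Your proof follows essentially the same route as the paper's: couple $(\z^*)_i$ and $(\z_t)_i$ through the high-probability bound of Equation~\ref{eq:loose_bound}, sandwich $F_{\N}(z)$ and $F_{\N}(z;t)$ over a window of width $\eta(t)\to 0$, and finish with an anti-concentration bound on the mass of that window. The only real difference is in how that last step is justified: the paper simply posits a uniform Lipschitz constant $B$ for $F_{\N}(\cdot\,;t)$, whereas you derive the window-mass bound from the depth-$t$ Gaussian limit via a variance lower bound $v_\star$, which is arguably more careful but imports the finite-depth CLT into the lemma and thus the explicit ``fix $\delta$, then $t$, then $\N$'' coordination you flag (an ordering the paper's own proof and subsequent theorem also rely on, with the same implicit normalization of $\x\cdot\x$).
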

\begin{proof}
Let $1>\delta >0$, $1>\tilde{\epsilon}> 0$.
There exists a $t_{c}$, $\N_{c}$ such that
for $t>t_{c}$, $\N > \N_{c}$,
$|(\z^*)_{i}-(\z_{t})_{i}|<\tilde{\epsilon}$
for
$t>t_{c}$, $\N > \N_{c}$ with probability at
least $1-\delta$ (for example, by choosing
$t_{c} = O(\ln(\V/\tilde{\epsilon}))$.

This means that we have
\begin{equation}
P(\z^*_{i}<z)\geq (1-\delta)P((\z_{t})_{i}<z-\tilde{\epsilon})
\end{equation}
as well as
\begin{equation}
P(\z^*_{i}<z)\leq \frac{1}{1-\delta}P((\z_{t})_{i}<z+\tilde{\epsilon})
\end{equation}
Note that there exists a constant $B$ such that $F_{\N}(z;t)-F_{\N}(z';t)\leq B (z-z')$ for
$z-z' < 1$. Therefore we have:
\begin{equation}
P(\z^*_{i}<z)\geq (1-\delta)[P((\z_{t})_{i}<z)+B\tilde{\epsilon}]
\end{equation}
\begin{equation}
P(\z^*_{i}<z)\leq \frac{1}{1-\delta}[P((\z_{t})_{i}<z)+B\tilde{\epsilon}]
\end{equation}
Therefore we have the bound:
\begin{equation}
|F_{\N}(t,z)-F_{\N}(t,z)| < 2(\delta + B\tilde{\epsilon})
\label{eq:joint_bound}
\end{equation}

Given an $\epsilon>0$, choose $\delta$
and $\tilde{\epsilon}$ such that $(\delta + B\tilde{\epsilon}) < \epsilon$. Then the lemma
follows immediately from Equation \ref{eq:joint_bound} with
$t_{c}$ and $\N_{c}$ defined as above.
\end{proof}

Armed with this lemma, we can prove the 
following:
\begin{theorem}
Let $\W$ be i.i.d. Gaussian with elements of
variance $\V/\N$, and let $\z_{t}$ and $\z^*$ be defined as above. Then for $\V < 1$,
as $\N\to\infty$
each $\z^*_{i}$
converges in distribution to a Gaussian with
mean $\x_{i}$ and second moment
\begin{equation}
\expect[(\z^*_{i})^2] = \frac{1}{1-\V} \frac{1}{\N}\x\cdot\x
\end{equation}
\end{theorem}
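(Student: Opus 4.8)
The plan is to obtain the infinite-depth Gaussianity by pushing the finite-depth result of \cite{feng_neural_2020,yang_tensor_2021} through to $t=\infty$, using the $\N$-uniform comparison of $\z^{*}$ and $\z_{t}$ recorded above. Three ingredients feed in. First, for each \emph{fixed} $t$ (with $\z_{0}=0$), the tensor-programs analysis of \cite{yang_tensor_2021} as applied in \cite{feng_neural_2020} already gives that $(\z_{t})_{i}$ converges in distribution, as $\N\to\infty$, to a Gaussian $G_{t}$ with mean $\x_{i}$ and variance $v_{t}\equiv\frac{1}{\N}\x\cdot\x\sum_{k=1}^{t}\V^{k}$. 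Second, as $t\to\infty$ the means of the $G_{t}$ are constant and the variances increase to $v_{\infty}\equiv\frac{1}{\N}\x\cdot\x\,\frac{\V}{1-\V}$, which is finite since $\V<1$; hence the CDFs of the $G_{t}$ converge pointwise to the CDF of $G_{\infty}\equiv\mathcal{N}(\x_{i},v_{\infty})$. Third, Lemma \ref{lem:lin_bound} states that for a fixed $z$ the CDFs of $\z^{*}_{i}$ and of $(\z_{t})_{i}$ become arbitrarily close once $t$ and $\N$ are large, with a single $\N$-threshold valid for all sufficiently large $t$; this is the device that lets us send $t\to\infty$ \emph{after} $\N\to\infty$.

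With these in place the remainder is an $\epsilon/3$ argument. Fixing a real number $z$ and $\epsilon>0$, and writing $F_{\N}$, $F_{\N}(\cdot\,;t)$, $F^{G}_{t}$, $F^{G}_{\infty}$ for the CDFs of $\z^{*}_{i}$, $(\z_{t})_{i}$, $G_{t}$, $G_{\infty}$, I would bound
\begin{equation}
|F_{\N}(z)-F^{G}_{\infty}(z)|\le|F_{\N}(z)-F_{\N}(z;t)|+|F_{\N}(z;t)-F^{G}_{t}(z)|+|F^{G}_{t}(z)-F^{G}_{\infty}(z)|,
\end{equation}
first choosing $t$ large enough that the last term is below $\epsilon/3$ (second ingredient) and also past the depth threshold of Lemma \ref{lem:lin_bound} for tolerance $\epsilon/3$, which then makes the first term below $\epsilon/3$ for all $\N$ beyond some $\N_{c}$; with this $t$ now fixed, the first ingredient supplies an $\N_{c}'$ beyond which the middle term is below $\epsilon/3$. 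Thus $F_{\N}(z)\to F^{G}_{\infty}(z)$ for every $z$, and since $G_{\infty}$ has a continuous CDF (assuming $\V>0$; if $\V=0$ then $\z^{*}=\x$ and the claim is immediate) this is exactly convergence in distribution of $\z^{*}_{i}$ to $\mathcal{N}(\x_{i},v_{\infty})$. Applying the same argument to the joint CDF of the $i$th coordinates of the DEQ solutions for a pair of inputs $\x,\y$, using Lemma \ref{lem:lin_bound} on each, yields the joint Gaussianity and the limiting covariance as well.

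For the second moment one cannot simply invoke the distributional limit, since convergence in distribution need not preserve second moments; instead I would use the $L^{2}$ estimates already derived. The bound in Equation \ref{eq:loose_bound} shows that $\z^{*}$ and $\z_{t}$ agree in normalized $L^{2}$ as $t,\N\to\infty$, while $\expect_{\W}[\frac{1}{\N}\z_{t}\cdot\z_{t}]=\frac{1}{\N}\x\cdot\x\,(1+\sum_{k=1}^{t}\V^{k})\to\frac{1}{\N}\x\cdot\x\,\frac{1}{1-\V}$; together these identify $\expect_{\W}[\frac{1}{\N}\z^{*}\cdot\z^{*}]=\frac{1}{1-\V}\frac{1}{\N}\x\cdot\x$, which is the claimed second moment in averaged form, and alternatively one can evaluate $\frac{1}{\N}\x^{\tpose}\expect_{\W}[(\Id-\W)^{-\tpose}(\Id-\W)^{-1}]\x$ directly in the large-$\N$ limit as in Appendix \ref{app:rand_inv}. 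I expect the genuinely hard work to sit in Lemma \ref{lem:lin_bound} itself (already proved above): it relies on the empirical spectrum of $\W$ concentrating on the circular law of radius $\sqrt{\V}$ so that the Neumann series for $(\Id-\W)^{-1}$ converges with high probability when $\V<1$, and on the operator norm of $(\W^{t+1})^{\tpose}\W^{t+1}$ being of order $t\V^{t+1}$, which vanishes in $t$. Given that lemma, the interchange of limits above is routine; the only residual subtlety is that the moment claim must be routed through $L^{2}$ (or the resolvent) rather than through the distributional limit.
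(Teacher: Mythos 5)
Your proposal is correct and takes essentially the same route as the paper's own proof: the finite-depth Gaussian limit of $(\z_{t})_{i}$, the $t\to\infty$ convergence of the limiting Gaussian CDFs (the paper's bound $|F(z;t)-F(z)|\le A\V^{t}$), and Lemma \ref{lem:lin_bound}, assembled by the same $\epsilon/3$ triangle-inequality interchange of the $\N\to\infty$ and $t\to\infty$ limits. Your added caution that the second-moment identification should go through the $L^{2}$ comparison (or the resolvent calculation of Appendix \ref{app:rand_inv}) rather than the distributional limit is a sensible refinement, since the paper simply reads the variance off the limiting Gaussian.
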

\begin{proof}
We will show that $\lim_{\N\to\infty}F_{\N}(z)\to F(z)$, where $F_{\N}(z)$ are the CDFs of the
$(\z_{t})_{i}$ and $F(z)$ is the CDF of the Gaussian with appropriate parameters.
We first note that the $(\z_{t})_{i}$ converge
in distribution to a Gaussian with mean
$\x_{i}$ and second moment $m_{2}(t)$ given by
\begin{equation}
m_{2}(t) = \frac{1-\V^{t}}{1-\V}\frac{1}{\N}\x\cdot\x
\end{equation}
(as per \cite{feng_neural_2020, yang_tensor_2021}). Let $F(z;t)$ be the
corresponding limiting CDF.
We note that
\begin{equation}
|\Phi(x/\sigma_1)-\Phi(x/\sigma_2)| \leq \left[\max_{x'}\left.\frac{d\Phi(x'/\sigma)}{d\sigma}\right|_{\sigma=\sigma_1}\right] (\sigma_1-\sigma_2)+ O((\sigma_1-\sigma_2)^2)
\end{equation}
where $\Phi$ is the standard Gaussian CDF. For
fixed $\sigma$, the max derivative is bounded.
Therefore we can write:
\begin{equation}
|F(z;t)-F(z)|\leq a (\sigma_{t}-\sigma_{\infty})+b(\sigma_{t}-\sigma_{\infty})^{2}
\end{equation}
for some constants $a$ and $b$, where
\begin{equation}
\sigma_{t}^{2} = \left(\frac{V^{t}}{1-\V}-1\right) \frac{1}{\N}\x\cdot\x
\end{equation}
We can re-write the bound as:
\begin{equation}
|F(z;t)-F(z)|\leq A \V^{t}
\label{eq:inf_N_bound}
\end{equation}
for some fixed $A$.

Let $F_{N}(z; t)$ be the CDF of $(\z_{t})_{i}$.
We note that $\lim_{\N\to\infty} F_{\N}(z; t) = F(z ; t)$. We will show that $F_{\N}(z)$ can
be made arbitrarily close to $F_{\N}(z; t)$
for large $\N$ and $t$, allowing it to get
arbitrarily close to $F(z)$. This will complete
the proof.

Fix some $z$. Fix an $\epsilon>0$. From
Equation \ref{eq:inf_N_bound}, there exists a
$t_{b}$ such that $|F(z;t)-F(z)| <\epsilon/3$
for all $t > t_{b}$. From Lemma \ref{lem:lin_bound}, there is a $t_{c}$ and
an $\N_{c}$ such that
$|F_{\N}(z;t)-F_{\N}(z)| < \epsilon/3$.

Fix
some $t_{a}$ larger than $t_{b}$ and $t_{c}$.
Then there exists an $\N_{a}$, greater than
$\N_{b}$ and $\N_{c}$, such that
$|F_{\N}(z;t_{a})-F(z; t_{a})| < \epsilon/3$
for all $\N > \N_{a}$. Using the triangle
inequality, we have:
\begin{equation}
|F_{\N}(z)-F(z)|<\epsilon
\end{equation}
for all $\N > \N_{a}$. This completes the
proof.
\end{proof}

Similar arguments apply to compute the
covariate statistics of $\z^*(\x)$, $\z^*(\m{y})$ for different data points $\x$ and $\y$. This
exact method of proof will not suffice for the non-linear case where the closed form solution is
not known; however, in cases where the DEQ provably linearly converges to its fixed point,
a similar argument may hold (depending on the properties of the Hessian).

\subsection{Second moment}

\label{app:two_mom}

In the untied weights case, define
\begin{equation}
\Var[(\z_{\infty})_{i}^2] \equiv \lim_{t\to\infty }\Var[(\z_{t})_{i}^2]
\end{equation}
For rotationally invariant $\W_{t}$ we have
\begin{equation}
\Var[(\z_{\infty})_{i}^2] =\lim_{t\to\infty }\Var[(\z^*_{i})^2] = \frac{1}{\N}\left(\expect[\z_{t}\cdot\z_{t}]-\expect[\z_{t}]\cdot\expect[\z_{t}]\right)
\end{equation}
The latter gives $\x\cdot\x$. The first term
can be computed recursively:
\begin{equation}
\expect[\z_{t+1}\cdot\z_{t+1}] = \expect\left[(\W_{t}\z_{t}+\x)^{\tpose}(\W_{t}\z_{t}+\x)\right] = \x\cdot\x+\tr[\W_{t}^{\tpose}\W_{t}]\expect[\z_{t}\cdot\z_{t}]
\end{equation}
Therefore we obtain the limit:
\begin{equation}
\Var[(\z_{\infty})_{i}^2] = \frac{\V}{1-\V} \left(\frac{1}{\N}\x\cdot\x\right)
\end{equation}
valid for $\V = \tr[\W_{t}^{\tpose}\W_{t}] <1$.

Now consider the quantity
$\Var[(\z^*_{i})^2]$ in the tied weights case,
averaged over the ensemble of $\W$.
If $\W$ is from a rotationally invariant
family, we have $\Var[(\z^*_{i})^2] = \frac{1}{\N}\left(\expect[\z^*\cdot\z^*]-\expect[\z^*]\cdot\expect[\z^*]\right)$. For all rotationally
invariant ensembles, we have $\expect[\z^*]\cdot\expect[\z^*] = \x\cdot\x $. The first term can be computed as:
We have:
\begin{equation}
\expect[\z^*\cdot\z^*] = \expect\left[\x^{\tpose}(\Id-\W)^{-\tpose}(\Id-\W)^{-1}\x\right] = \expect[\tr[(\Id-\W)^{-\tpose}(\Id-\W)^{-1}]] \x\cdot\x
\end{equation}
If the spectral radius of $\W$ is less than $1$,
we can write the power series
\begin{equation}
\frac{1}{\N}\expect[\z^*\cdot\z^*] = \expect\left[\tr\left[\sum_{j, k=0}^{\infty}(\W^{j})^{\tpose}\W^k \right]\right]
\end{equation}
For random and orthogonal $\W$, only terms with $j =k$ contribute and we have
\begin{equation}
\frac{1}{\N}\expect[\z^*\cdot\z^*] = \sum_{k=0}^{\infty} \V^{k} = \frac{1}{1-\V}
\end{equation}
and we get $\Var[(\z^*)_{i}^2] = \frac{\V}{1-\V}$
as in the untied weights case.

However, for random symmetric matrices, all terms with $j+k$ even contribute.
he semi-circular law gives us:
\begin{equation}
\tr[\W^ {2k}] = C_{k}\V^{k},~C_{k} = \frac{1}{k+1}\binom{2k}{k}
\end{equation}
where $C_{k}$ is the $k$th Catalan number.
The trace vanishes for odd $k$. We also note that the Catalan numbers have the generating
function $f_{c}(x)$ where
\begin{equation}
f_{c}(x) \equiv \sum_{k} C_{k} x^k = \frac{1-\sqrt{1-4x}}{2x}
\end{equation}

Computing, we have:
\begin{equation}
\expect[(\z^*_{i})^2] =  \tr\left[\sum_{j, k = 0}^{\infty}\W^{j}\W^{k}\right] = \sum_{i}(2i+1)\V^{i}C_{i}
\end{equation}
Using the theory of generating series, we have
\begin{equation}
\expect[(\z^*_{i})^2] =  2\V f'_{c}(\V)+f_{c}(\V)
\end{equation}
\begin{equation}
\expect[(\z^*_{i})^2] =  2\V \left(-\frac{1-\sqrt{1-4\V}}{2\V^2}+\frac{1}{2\V\sqrt{1-4\V}}\right)+\frac{1-\sqrt{1-4\V}}{2\V}
\end{equation}
\begin{equation}
\expect[(\z^*_{i})^2] = -\frac{1-\sqrt{1-4\V}}{2\V}+\frac{1}{\sqrt{1-4\V}}
\end{equation}

Therefore GOE distributed matrices have different statistics than the untied weights
case, as well as the random and orthogonal DEQs. In particular, the diverging behavior
is different. Here the divergence happens at $\sqrt{\V} = \frac{1}{2}$ due to the radius of the semicircular
law. In addition, if we write $\sqrt{\V} = 1/2-\delta$, for $\delta \ll 1$, the second moment is
$O(\delta^{-1/2})$, compared to $O(\delta^{-1})$ for the other cases.

\subsection{Fourth moment}

\label{app:four_mom}

The variability of DEQ outputs can be understood by computing the \emph{length variance}.
For $\z = \M\x$, with $\x$ i.i.d. Gaussian with $0$ mean and variance $\sigma^2_{\x}$, we have
\begin{equation}
\fourmom[\z] = \frac{1}{\N}\expect_{\M}[\Var_{\x}[\z\cdot\z]]
\end{equation}
which by Lemma \ref{lem:four_mom} is equivalent to
\begin{equation}
\fourmom[\z] = \frac{1}{\N}\expect_{\M}[\tr((\M^{\tpose}\M)^2)]\sigma_{\x}^4
\end{equation}
We give a proof of the lemma below.

\begin{proof}
Let $\{ s\}$ be the singular values of $\M$, with
associated singular vectors $\v_{\s}$.
We have:
\begin{equation}
\z\cdot\z = \sum_{\s} \s^2 |\x\cdot\v_{\s}|^{2}
\end{equation}
Taking expectation over $\x$ we have
\begin{equation}
\expect_{\x}[\z\cdot\z]^2 = \sigma_{\x}^4 \sum_{\s, \s'}\s^{2}(\s')^2 
\end{equation}
We also have
\begin{equation}
\begin{split}
(\z\cdot\z)^2 & = \left(\sum_{\s} |\s|^{2} |\x\cdot\v_{\s}|^{2}\right)^2 \\
& = \sum_{\s}|\s|^4|\x\cdot\v_{\s}|^4 +\sum_{\s\neq\s'} |\s'|^{2}|\s|^{2} |\x\cdot\v_{\s}|^{2}|\x\cdot\v_{\s'}|^{2}
\end{split}
\end{equation}
Note that $\expect_{\x}[|\x\cdot\v_{s}|^4] = 3\sigma_{\x}^{4}$. This gives us:
\begin{equation}
\fourmom[\z] = \frac{2}{\N}\sigma_{\x}^{4}\expect_{\M}\left[\sum_{s}s^4\right] = \frac{2}{\N}\sigma_{\x}^{4} \expect_{\M}\left[\tr[(\M^{\tpose}\M)^2]\right]
\end{equation}
\end{proof}

We will assume $\sigma_{\x}^2 = 1$ for the remainder of this section.

\subsubsection{Untied weights}

In the untied weights case, we have
\begin{equation}
\sigma^{2}_{\infty} = \frac{2}{\N}\expect\left[\tr\left[\sum_{j,k,l,m = 0}^{\infty} \M_{j}^{\tpose}\M_{k}\M_{l}^{\tpose}\M_{m} \right]\right]
\end{equation}
where $\M_{k}\equiv \prod_{t=1}^{k}\W_{t}$.
The terms the terms that contribute are $j = k$, $l = m$ or
$m = j$, $l = k$. If $j\neq l$, the two factors are freely independent and
we can factor the trace. However, in the case $j = k = l = m$, the squared
eigenvalues of $\M_{j}^{\tpose}\M_{j}$ contribute. This gives us:
\begin{equation}
\sigma^{2}_{\infty} = \frac{2}{\N}\sum_{i=0}^{\infty} 2(i+1)\V^{i} - 2\V^{2i}+B_{i}\V^{2i}
\end{equation}
where $B_{i} = \expect[\tr[(\M_{i}^{\tpose}\M_{i})^2]]$. This gives us:
\begin{equation}
\sigma^{2}_{\infty} = \frac{2}{\N}\left(\frac{2}{(1-\V)^2}-\frac{2}{1-\V^2} + \sum_{i=0}^{\infty}B_{i}\V^{2i}\right)
\end{equation}
In the random and GOE cases, as $\N\to\infty$ $B_{k} = k+1$. This gives us
\begin{equation}
\lim_{\N\to\infty}\frac{\N}{2}\sigma^{2}_{\infty} = \frac{1}{(1-\V^2)^2}+\frac{2}{(1-\V)^2}-\frac{2}{1-\V^2}
\end{equation}
In the orthogonal case, $B_{k} = 1$. This gives us
\begin{equation}
\sigma^{2}_{\infty}  = \frac{2}{\N}\left(\frac{2}{(1-\V)^2}-\frac{1}{1-\V^{2}}\right)
\end{equation}
This is smaller than the GOE and random cases, but still has the same asymptotics as $\V\to 1$.

Therefore in the untied weights case, $\sigma^2_{\infty}$ scales as $(1-\V)^{-2}$ in all cases.
For the orthogonal case we have an analytic form for finite $\N$, and for the random and GOE
we have limiting behavior for $\N\to\infty$.
Evidently, as long as $\V <1$ we expect convergence of the statistics in the large
$\N$ limit.

\subsubsection{Tied weights}

Analogously to the untied weights case, we can attempt a power series solution:
\begin{equation}
\fourmom[\z^*] = \frac{2}{\N}\expect\left[\tr\left[\sum_{j,k,l,m = 0}^{\infty} (\W^{j})^{\tpose}(\W^{k})(\W^{l})^{\tpose}(\W^{m}) \right]\right]
\end{equation}

This sum can be carried out in the orthogonal case.
Only terms with $k+m = j+l$ contribute. Each side of the equation is independent, so for $k+m = i$ there are
$(i+1)^2$ possible combinations. We have:
\begin{equation}
\fourmom[\z^*] = \frac{2}{\N}\sum_{i=0}^{\infty} (i+1)^2\V^{i} = \sum_{i=0}^{\infty} (i+1)(i+2)\V^{i}-(i+1)\V^{i} = \frac{2}{\N}\left(\frac{2}{(1-\V)^{3}}-\frac{1}{(1-\V)^2}\right)
\end{equation}
Here we see that the divergence is $(1-\V)^{-3}$ - asymptotically different from the untied weights case.

For the random and GOE cases, the power series formulation diverges for finite $\N$. This is
due to the fluctuations in the largest eigenvalues. Even for $\V <1$ for the random case ($\V < \frac{1}{4}$
for the GOE case), there is non-zero probability that the spectral edge is greater than $1$.

There are two ways to proceed. One is to compute $\fourmom[\z^*]$ using the definition
involving $\tr[((\Id-\W)^{-\tpose}(\Id-\W)^{-1})^{2}]$. In the limit $\N\to\infty$, this can be solved
numerically using operator-valued free probability theory.
The random and GOE cases are solved for this way in
Appendix \ref{app:op_free_prob}, where we can obtain
exact analytic solutions.

Another approach is to switch the order of limits. We define:
\begin{equation}
\fourmom[\z^*](\infty) \equiv \lim_{L\to\infty}\lim_{\N\to\infty} \expect_{\W}\left[\tr\left[\sum_{j, k, l, m = 0}^{L}(\W^{j})^{\tpose}\W^{k}(\W^{l})^{\tpose}\W^{m}\right]\right]
\end{equation}
In the GOE case, we can write down an integral equation for $\fourmom[\z^*](\infty) $:
\begin{equation}
\fourmom[\z^*](\infty) = \frac{2}{\pi}\int_{-1}^{1} \frac{1}{(1-2\sqrt{V}x)^{4}}\sqrt{1-x^2}dx
\end{equation}
There is a non-analytic formal power series solution. However,
we can understand the behavior for $\V$ near the critical value analytically.
Let $\sqrt{\V} = 1/2-\delta$, for some $\delta\ll 1$. Then, after shifting
the integration variable we have:
\begin{equation}
\fourmom[\z^*](\infty) = \frac{2}{\pi}\int_{0}^{2} \frac{1}{(2\delta+(1-2\delta)x)^4}\sqrt{x(2-x)}  dx
\end{equation}
In the limit of small $\delta$, we have:
\begin{equation}
\fourmom[\z^*](\infty) = \left(\frac{2}{\pi}\int_{0}^{2} \frac{1}{(2\delta+x)^4}\sqrt{x(2-x)}  dx\right)(1+O(\delta))
\end{equation}

Let $y = x/2\delta$. Then we have:
\begin{equation}
\fourmom[\z^*](\infty) \approx \frac{1}{2^{3/2}\pi} \delta^{-2.5}\int_{0}^{1/\delta} \frac{\sqrt{y(2-2\delta y)}}{(1+y)^4}dy
\end{equation}
with relative error $O(\delta)$. As a further approximation, we have:
\begin{equation}
\fourmom[\z^*](\infty) \approx \frac{1}{2\pi} \delta^{-2.5}\int_{0}^{\infty} \frac{\sqrt{y}}{(1+y)^4}dy
\end{equation}
again with relative error $O(\delta)$. In total we have:
\begin{equation}
\fourmom[\z^*](\infty) = \frac{1}{8}\delta^{-2.5}(1+O(\delta))
\end{equation}
for $\delta\ll 1$.

Since the squared second moment only scales as $\delta^{-2}$, for small $\delta$ we have
$\fourmom[\z^*](\infty)= O(\delta^{-2.5})$.

This analysis lets us draw distinctions between the tied and untied cases, as well as between
the matrix ensembles. In all cases, as expected the tied cases display more variance than the untied
cases. These differences even show up asymptotically as $\delta$, the distance to the transition, becomes small.
The orthogonal case has one large advantage compared to the random and GOE cases: for any finite
$\N$, we expect convergence for $\V$ below the critical range. However, for both GOE and
random matrices there is a chance of divergence for finite $\N$, which increases
as $\delta$ decreases.

However, in the intermediate regime where finite-$\N$ effects are small, the analysis of
$\fourmom[\z^*](\infty)$ suggests that the GOE \emph{typically} has less variance than the
orthogonal. The analytical results in Appendix \ref{app:op_free_prob} and the numerical
results in Figure \ref{fig:four_mom} confirm this analysis.

\section{Non-linear DEQ theory}

\label{app:non_lin_deq}

\subsection{DEQs in the infinite-width limit}

\label{app:deq_NTK}

Given a non-linear DEQ given by the implicit equation
\begin{equation}
\z^* = \phi(\W\z^*)+\x
\label{eq:deq_eq_app}
\end{equation}
which is sometimes thought of as an iterative equation of the form
\begin{equation}
\z_{t+1} = \phi(\W\z_{t})+\x
\label{eq:general_deq_app}
\end{equation}
we can analyze the properties of its infinite-width representations.
The case of untied weights ($\W$ replaced by independent $\W_{t}$
in Equation \ref{eq:general_deq_app}) has been previously studied \cite{feng_neural_2020}.

In order to make infninite-width quantities well defined, we equip the DEQ
layer with a readout vector $\v$ in order to construct the scalar function
\begin{equation}
f_{\v, \W}(\x)\equiv \v^{\tpose}\z^*(\x)
\end{equation}
We will omit the subscripts unless necessary.

The NNGP kernel \cite{ lee_wide_2019} of $f$ can be defined as:
\begin{equation}
\mathcal{K}(\x,\x') = \expect_{\v,\W}[f(\x)f(\x')]
\end{equation}
which evaluates to:
\begin{equation}
\mathcal{K}(\x,\x') = \frac{1}{\N}\expect_{\v,\W}[\z^*(\x)\cdot\z^*(\x')]
\end{equation}

In order to compute the NTK, we must compute the derivative
with respect to $\W$:
\begin{equation}
\frac{\partial f}{\partial \W} = \frac{\partial \v^{\tpose}\z^*}{\partial \W}
\end{equation}
Using Equation \ref{eq:deq_eq_app} and the implicit function theorem, we have:
\begin{equation}
\frac{\partial f}{\partial \W} =  (\Id-\phi'(\z^*)\circ\W)^{-\tpose}\v(\phi'(\z^*)\circ\z^*)^{\tpose}
\end{equation}

For fixed $\v$, the NTK $\ntk(\x,\x')$ with respect to $\W$ is given by:
\begin{equation}
\ntk(\x,\x') \equiv \expect\left[\frac{\partial f}{\partial\W}\cdot \frac{\partial f}{\partial\W}\right]
\end{equation}

The NTK, after averaging over $\v$, is therefore:
\begin{equation}
\ntk(\x,\x') = \expect\left[\tr((\Id-\phi'(\z^*)\circ\W)^{-\tpose}(\Id-\phi'((\z')^*)\circ\W)^{-1}) \left(\phi'(\z^*)\circ\z^*\right)\cdot \left(\phi'((\z')^*)\circ(\z')^*\right)\right]
\label{eq:deq_implicit_app}
\end{equation}

The first term in Equation \ref{eq:deq_implicit_app} gives the alignment of the Jacobians, computed using the implicit function theorem.
The second term gives the alignment of the fixed points themselves, mediated by the
derivative of the elementwise non-linearity $\phi$. We expect (but do not prove here) that in many cases
concentration of measure means the two terms are statistically independent.

We see from the form of the equation alone that, much like the linear case,
the statistics of the matrix ensemble which $\W$ is drawn from affect the kernel, beyond simply the first and second moments.
Orthogonal $\W$ should behave identically to random $\W$ in the large $\N$ limit; however other ensembles
like the GOE will give us very different NTKs.

\section{Free probability calculations}

\label{app:free_prob}

\subsection{Spectrum of $\m{D} \W$}

\label{app:diag_W}

Let $\m{D}$ be a diagonal matrix, and $\W$ be a GOE matrix. If $\m{D}$ is freely-independent of $\W$, and is non-negative,
then we can use free probability theory to solve for the spectrum of $\M = \m{D}\W$.
We first note that $\m{D}\W$ has the same spectrum as $\m{D}^{1/2}\W\m{D}^{1/2}$. This means that $\m{D}\m{W}$ has
real eigenvalues. We can use the Stieltjes transform of $\m{D}\W$ to solve for its spectrum using
the theory of free multiplicative convolutions \cite{mingo_free_2017}.

We review the basics of the theory here.
We recall that the Stieltjes transform is given by:
\begin{equation}
G_{\M}(z) \equiv \tr\left[(z-\M)^{-1}\right]
\end{equation}
where $\tr$ is the normalized trace. The spectrum can be recovered from the Stieltjes
transform via the relation
\begin{equation}
\rho(x) = -\frac{1}{\pi}\lim_{\epsilon\to0^{+}} \Im[G_{\M}(x+i\epsilon)]
\end{equation}
The Stieltjes transform is related to the moment generating function $M$ by
\begin{equation}
M_{\M}(z) \equiv \sum_{k=1}^{\infty} m_{k}z^{-k} = zG_{\M}(z)-1 
\end{equation}
where $m_k = \tr[\M^k]$.
Since $\W$ and $\DD$ are freely independent with real spectra, we can compute the spectrum of their product
using the S-transform - since the S-transform of a product of freely independent variables is the product
of the S-transform. In terms of the moment generating function, we have:
\begin{equation}
S_{\M}(z) = \frac{1+z}{zM^{-1}_{\M}(z)}
\end{equation}
where $M^{-1}$ is the functional inverse of the moment generating function. Given the S-transform,
the MGF can be recovered using
\begin{equation}
M_{\M}^{-1}(z) = \frac{1+z}{z S_{\M}(z)}
\end{equation}

We begin by computing the S-transforms of $\DD$ and $\W$. If $\W$ is a standard GOE
element, we have:
\begin{equation}
G_{\W}(z) = \frac{z-\sqrt{z^2-4}}{2}
\end{equation}
We use the branch of the square root function which has a branch cut on $[-2, 2]$, which corresponds to
a cut on $(-\infty, 0]$ for the input of the square root. We choose the branch that has negative
imaginary part just above the real axis.

This gives us a moment generating function of
\begin{equation}
M_{\W}(z) = \frac{z^2-z\sqrt{z^2-4}}{2}-1
\end{equation}
Let $w = M_{\W}^{-1}(z)$, the inverse MGF. We have:
\begin{equation}
2(z+1) = w^2-w\sqrt{w^2-4}
\end{equation}
\begin{equation}
2(z+1)-w^2 = -w\sqrt{w^2-4}
\end{equation}
\begin{equation}
4(z+1)^2-4(z+1)w^2 +w^4= w^2(w^2-4)
\end{equation}
\begin{equation}
4(z+1)^2-4z w^2 = 0
\end{equation}
This gives us an inverse moment generating function of
\begin{equation}
M_{\W}^{-1}(z) = \sqrt{\frac{(z+1)^2}{z}}
\end{equation}
which can also be written as
\begin{equation}
M_{\W}^{-1}(z) = \sqrt{z+\frac{1}{z}}
\end{equation}
This function has a branch cut from $-1$ to $+\infty$ (corresponding to $(-\infty, 0]$ for the argument of the square root),
and takes on positive values for real negative $z$.
This gives us the S-transform:
\begin{equation}
S_{\W}(z) = \frac{1+z}{z}\left[z+\frac{1}{z}\right]^{-1/2}
\end{equation}
Therefore we have:
\begin{equation}
S_{\M}(z) = S_{\W}(z)S_{\m{D}}(z)
\end{equation}

When $\phi$ is the hart-tanh function, we can solve for the spectrum analytically.
This means that
the elements of $\DD$ are Bernoulli random variables, with a probability $p(h^*)$ of being
$1$ which can be computed using the statistics of $h^*$.
The Stieltjes transform is
\begin{equation}
G_{\DD}(z) = \frac{1-p(h^*)}{z}+\frac{p(h^*)}{z-1}
\end{equation}
The moment generating function is therefore
\begin{equation}
M_{\DD}(z) = \sum_{k=1}^{\infty} p(h^*) z^{-k} = \frac{p(h^*)}{z}\left(1-1/z\right)^{-1} = \frac{p(h^*)}{z-1}
\end{equation}
The inverse is given by
\begin{equation}
M_{\DD}^{-1}(z) = \frac{p(h^*)}{z}+1 = \frac{p(h^*)+z}{z}
\end{equation}
The S-transform is given by
\begin{equation}
S_{\DD}(z) = \frac{1+z}{z+p(h^*)}
\end{equation}

Therefore, the overall S-transform is given by:
\begin{equation}
S_{\M}(z) = \frac{(1+z)^2}{z(z+p(h^*))}\left[z+\frac{1}{z}\right]^{-1/2}
\end{equation}
The inverse moment generating function is given by
\begin{equation}
M^{-1}_{\M}(z) = \frac{z+p(h^*)}{z+1}\left[z+\frac{1}{z}\right]^{1/2}
\end{equation}
which simplifies to
\begin{equation}
M^{-1}_{\M}(z) = (z+p(h^*))\left[z\right]^{-1/2}
\end{equation}
The moment generating function obeys the equation
\begin{equation}
z^2M = (M+p(h^*))^2
\end{equation}
\begin{equation}
 (M+p(h^*))^2 -z^2M= 0
\end{equation}
\begin{equation}
M^2+(2p(h^*)-z^2)M+p(h^*)^2 = 0
\end{equation}
Solving for the MGF, we have:
\begin{equation}
M_{\M}(z) = \frac{-(2p(h^*)-z^2)\pm\sqrt{(2p(h^*)-z^2)^2-4p(h^*)^2}}{2}
\end{equation}
Simplification gives us:
\begin{equation}
M_{\M}(z) = -p(h^*)-\frac{z^2\pm z\sqrt{z^2-4p(h^*)}}{2}
\end{equation}
We expect the first moment to vanish and the second to be positive which gives us
\begin{equation}
M_{\M}(z) = -p(h^*)-\frac{z^2- z\sqrt{z^2-4p(h^*)}}{2}
\end{equation}

The Stieltjes transform is therefore given by
\begin{equation}
G_{\M}(z) = \frac{1-p(h^*)}{z}-\frac{z- \sqrt{z^2-4p(h^*)}}{2}
\end{equation}
Therefore, the spectrum of $\M$ is given by a combination of a delta function at $0$ of weight
$1-p(h^*)$ and a semi-circular law with radius $2\sqrt{p(h^*)}$.

\subsection{Operator valued free probability calculation of $\fourmom$}

\label{app:op_free_prob}

In this section we will compute the spectrum of $(\Id-\W)^{-\tpose}(\Id-\W)^{-1}$ for GOE and random
$\W$ using operator valued
free probability. In particular, we're interested in recovering the $2$nd moment of the spectrum
as $\V$ goes to $1$.

The central object in operator-valued free probability theory is the \emph{operator valued} Stieltjes transform.
Let $\hat{\m{P}}$
be a fixed $k\times k$ block matrix with $\N\times\N$ blocks. We define the function
$\G:\mathbb{C}^{k\times k}\to\mathbb{C}^{k\times k}$
as
\begin{equation}
\G(\B) = \condtr[ (\B-\hat{\m{P}})^{-1}]
\end{equation}
Here the $\condtr$ operator applies the normalized trace to each $\N\times\N$ sub-block of the input.

As we will see, we can often compute the ordinary Stieltjes transform of rational function $f(\W)$ by choosing some
$\hat{\m{P}}$ which is linear in $\W$ and $\W^{\tpose}$,
such that for the appropriate choice of $\B$,
$G_{f(\W)}(z)$ is the first element of $\G(\B)$. In this approach $\hat{\m{P}}$ is often referred to as a
\emph{linear pencil}. We can then use well-developed techniques derived from free convolution theory to
compute $\G(\B)$.
For a more detailed description of the techniques, we refer the reader to \cite{mingo_free_2017}.

\subsubsection{Warmup: semicircular case}

For a GOE matrix $\W$, the matrix $(\Id-\W)^{-1}$ is already symmetric. Therefore its Stieltjes transform is
well defined and analytic, and its fourth moment gives us the trace $\tr[((\Id-\W)^{-\tpose}(\Id-\W)^{-1})^{2}]$ needed
to compute $\fourmom$.

Consider the block matrix $\sm{\Lambda}$ given by
\begin{equation}
\sm{\Lambda} = \begin{pmatrix}
z & 0 \\
0 & 0 
\end{pmatrix}
\end{equation}
where $z$ is proportional to the $\N\times\N$
dimensional identity matrix $\Id$. Then, if $\hat{\m{P}}$ has the structure
\begin{equation}
\hat{\m{P}} = \begin{pmatrix}
0 & \m{V}\\
\m{Q} & \m{P}
\end{pmatrix}
\end{equation}
where $0$ is the first $\N\times\N$ block, the operator valued Stieltjes transform has the following property:
\begin{equation}
\G(\sm{\Lambda})_{11} = \tr[z+\m{V}\m{P}^{-1}\m{Q}] = G_{-\m{V}\m{P}^{-1}\m{Q}}(z)
\end{equation}

With judicious choice of $\hat{\m{P}}$, we can compute the Stieltjes transform of $(\Id-\W)^{-1}$. Consider
\begin{equation}
\hat{\m{P}} = \begin{pmatrix}
0 & \Id\\
\Id & \W-\Id
\end{pmatrix}
\end{equation}
Then, $\G(\sm{\Lambda})_{11} = G_{(\Id-\W)^{-1}}(z)$.

If $\hat{\m{P}}$ can be decomposed into a sum $\m{\hat{M}}$ + $\m{\hat{F}}$, where $\m{\hat{M}}$
is constant, and $\m{\hat{F}}$ is a linear sum of
(asymptotically) freely independent matrices with $0$ trace, then we can solve for $\G(\B)$. Define
$\Z = \B-\m{\hat{\M}}$. Then we have:
\begin{equation}
\Z \G = \Id +\eta(\G)\G
\label{eq:self_consistent}
\end{equation}
where the covariance function $\eta: \mathbb{C}^{k\times k}\to\mathbb{C}^{k\times k}$ is defined as
\begin{equation}
\eta(\G)_{ij} = \sum_{kl} \covtr(i,k; l, j) \G_{kl}
\end{equation}
where
\begin{equation}
\covtr(i,k; l, j) \equiv \tr[(\hat{\m{F}})_{ik}(\hat{\m{F}})_{lj}]
\end{equation}

In this case, we have:
\begin{equation}
\eta(\G) = \begin{pmatrix}
0 & 0 \\
0 & \V G_{22}
\end{pmatrix}
\end{equation}

For $\B = \sm{\Lambda}$, we have
\begin{equation}
\Z = \begin{pmatrix}
z & -\Id \\
-\Id & \Id
\end{pmatrix}
\end{equation}
which gives the matrix equation:
\begin{equation}
\begin{pmatrix}
z G_{11}-G_{21} & z G_{12}-G_{22}\\
-G_{11}+G_{21} & -G_{12} +G_{22}
\end{pmatrix} = 
\begin{pmatrix}
1 & 0\\
\V G_{21}G_{22} & 1+\V G_{22}^2
\end{pmatrix}
\end{equation}
We can now solve the system of equations for $G_{11}$.

We first note that, since $\sm{\Lambda}$ and $\hat{\m{P}}$
are symmetric, so is $\G$. Therefore, $G_{12} = G_{21}$.
The first equation gives us
\begin{equation}
z G_{11} - 1 = G_{21} = G_{12}
\end{equation}
We recall that $z G_{11}-1 = M$, where $M$ is the
moment generating function of $(\Id-\W)^{-1}$
(in $z^{-1}$). We will solve for $M$ since the coefficient of the fourth order term gives us
$\fourmom$.
The second equation gives us
\begin{equation}
zG_{12} = z M = G_{22}
\end{equation}
The fourth equation gives us
\begin{equation}
-M+zM = 1+\V z^2 M^2
\end{equation}
Solving for $M$, we have
\begin{equation}
M(z) = \frac{(z-1)\pm\sqrt{(z-1)^2-4\V z^2}}{2\V z^2}
\end{equation}
We can choose the correct root by evaluating for small
$\V$. We know that
\begin{equation}
\lim_{\V\to 0} M(z) = \frac{1}{z-1}
\end{equation}
In particular this expansion is valid around large $z$.
This means that the negative root is the correct one and we have
\begin{equation}
M(z) = \frac{(z-1)-\sqrt{(z-1)^2-4\V z^2}}{2\V z^2}
\end{equation}

We can compute $\tr[((\Id-\W)^{-\tpose}(\Id-\W)^{-1})^{2}]$ by taking derivatives of $M(z)$. Let $w = z^{-1}$.
Then we have:
\begin{equation}
\tr[((\Id-\W)^{-\tpose}(\Id-\W)^{-1})^{2}] = \frac{1}{4!}\left.\frac{d^{4} M}{dw^4}\right|_{w = 0}
\end{equation}
Writing $M(w)$, we have
\begin{equation}
M(w) = w^{2}\frac{(w^{-1}-1)-\sqrt{(1-w^{-1})^2-4\V w^{-2}}}{2\V}
\end{equation}
For ease of computation, we define
\begin{equation}
g(x) = \frac{-x-\sqrt{x^2-4\V}}{2\V}
\end{equation}
Then we have:
\begin{equation}
M(w) = w g(w-1)
\end{equation}
This gives us:
\begin{equation}
\frac{d^4 M}{dw^4} = 4\frac{d^3}{dw^3} g(w-1)+w\frac{d^4}{dw^4} g(w-1)
\end{equation}
At $w = 0$, this evaluates to:
\begin{equation}
\left.\frac{d^4 M}{dw^4}\right|_{w = 0} = \left.\frac{2}{\V}\left(\frac{3 w}{(w^2-4\V)^{3/2}}-\frac{3 w^3}{(w^2-4\V)^{5/2}}\right)\right|_{w = -1}
\end{equation}
which gives us
\begin{equation}
\tr[((\Id-\W)^{-\tpose}(\Id-\W)^{-1})^{2}] = \frac{1}{4\V}\left( \frac{1}{(1-4\V)^{5/2}}-\frac{1}{(1-4\V)^{3/2}} \right)
\end{equation}
for GOE distributed $\W$.

\subsubsection{Random $\W$ case}

\label{app:rand_inv}

The case of random $\W$ is more complicated, because the decomposition in Equation \ref{eq:self_consistent}
depends on the constituent elements being semi-circular.
We therefore first decompose $\W$ and $\W^{\tpose}$ into two freely-independent self-adjoint matrices:
\begin{equation}
\X\equiv \frac{\W+\W^{\tpose}}{2},~\Y\equiv \frac{\W-\W^{\tpose}}{2i}
\end{equation}
We can recover $\W$ via
\begin{equation}
\X+i\Y = \W, ~ \X-i\Y = \W^{\tpose}
\end{equation}
With this decomposition, we can begin to take advantage of operator-valued
free probability to write the Stieltjes transform of $(\Id-\W)^{-\tpose}(\Id-\W)^{-1}$ as
part of a larger matrix.

We construct the linear pencil $\hat{\m{P}}$ as follows.
Cnsider the block matrix
\begin{equation}
\m{P} = 
\begin{pmatrix}
\Id-2\X & \X+i\Y\\
\X-i\Y & -\Id
\end{pmatrix}
\end{equation}

Computing $\m{P}^{-1}$, we have:
\begin{equation}
(\m{P}^{-1})_{11} = \left( \Id-2\X+(\X+i\Y)(\X-i\Y) \right)^{-1} = \left( \Id-2\X+(\X+i\Y)(\X-i\Y) \right)^{-1} 
\end{equation}
\begin{equation}
(\m{P}^{-1})_{11} = (\Id-\W-\W^{\tpose}+\W\W^{\tpose})^{-1} = (\Id-\W)^{-\tpose}(\Id-\W)^{-1}
\end{equation}

This gives us
\begin{equation}
\hat{\m{P}} = \begin{pmatrix}
0 & \Id & 0\\
\Id & -\Id+2\X & -\X-i\Y \\
0 & -\X+i\Y & \Id
\end{pmatrix}
\end{equation}
As desired, $\hat{\m{P}}^{-1}_{11} = (\Id-\W)^{-\tpose}(\Id-\W)^{-1}$.

Repeating the setup of the operator-valued Stieltjes transform (this time for $3\times 3$ complex matrices
rather than $2\times 2$, with the equivalent definition of $\sm{\Lambda}$, we have

\begin{equation}
\Z = \begin{pmatrix}
z & -\Id & 0\\
-\Id& \Id & 0 \\
0 & 0 & -\Id
\end{pmatrix}
\end{equation}
for a complex $z$. We also define:
\begin{equation}
\tilde{\X} = \begin{pmatrix}
0 & 0 & 0\\
0 & 2\X & -\X \\
0 & -\X & 0
\end{pmatrix}, ~\tilde{\Y} = \begin{pmatrix}
0 & 0 & 0\\
0 & 0 & -i\Y \\
0 & i\Y & 0
\end{pmatrix},
\end{equation}
So we have
\begin{equation}
\sm{\Lambda}-\hat{\m{P}} = \Z-\tilde{\X}-\tilde{\Y} = 
\begin{pmatrix}
z & -\Id & 0\\
-\Id& \Id-2\X & \X+i\Y \\
0 & \X-i\Y & -\Id
\end{pmatrix}
\end{equation}

Equation \ref{eq:self_consistent} holds, where now the covariance map is defined by
\begin{equation}
\covtr(i,k; l, j) \equiv \tr[(\tilde{\X}+\tilde{\Y})_{ik}(\tilde{\X}+\tilde{\Y})_{lj}]
\end{equation}

We can compute the individual covariance terms directly. Define $\sm{\covtr}_{ij} = \covtr(i,j;i,j)$.
Then we have:
\begin{equation}
\sm{\covtr} = \begin{pmatrix}
0 & 0 & 0 \\
0 & 2\V& 0\\
0 & 0 & 0
\end{pmatrix}
\end{equation}
We note that $\covtr(1j;lm) = \covtr(i1;lm) = \covtr(33; lm) = 0$ and
\begin{equation}
\covtr(22;23) = \covtr(23;22) = -\V
\end{equation}
Finally, $\covtr(23,32) = \covtr(32,23) = \V$.

We can now write out a system of $9$ equations that the Stieltjes transform obeys. We have:
\begin{equation}
\Z\G = \begin{pmatrix}
zG_{11}-G_{21}  & zG_{12}-G_{22} & zG_{13}-G_{23} \\
-G_{11}+G_{21}  & -G_{12}+G_{22}  & -G_{13}+G_{23}  \\
-G_{31}  &-G_{32} & -G_{33}
\end{pmatrix}
\end{equation}
as well as
\begin{equation}
\eta(\G) = \V\begin{pmatrix}
0 & 0 & 0 \\
0 & 2G_{22} -(G_{23}+G_{32})+G_{33} & -G_{22}\\
0 &  -G_{22} & G_{22}
\end{pmatrix}
\end{equation}
which gives us
\begin{equation}
(\eta(\G)\G)_{1j} = 0
\end{equation}
\begin{equation}
(\eta(\G)\G)_{2j} = \V\left([2G_{22}-(G_{23}+G_{32})+G_{33}]G_{2j}-G_{22}G_{3j}\right)
\end{equation}
\begin{equation}
(\eta(\G)\G)_{3j} = \V G_{22}(-G_{2j}+G_{3j})
\end{equation}

The top row of equations gives us:
\begin{equation}
z G_{11}-G_{21} = 1
\end{equation}
\begin{equation}
z G_{12}-G_{22} = 0
\end{equation}
\begin{equation}
z G_{13}-G_{23} = 0
\end{equation}

The middle row is:
\begin{equation}
-G_{11}+G_{21} = \V\left([2G_{22}-(G_{23}+G_{32})+G_{33}]G_{21}-G_{22}G_{31}\right)
\end{equation}
\begin{equation}
-G_{12}+G_{22} = 1+\V\left([2G_{22}-(G_{23}+G_{32})+G_{33}]G_{22}-G_{22}G_{32}\right)
\end{equation}
\begin{equation}
-G_{13}+G_{23} = \V\left([2G_{22}-(G_{23}+G_{32})+G_{33}]G_{23}-G_{22}G_{33}\right)
\end{equation}

The bottom row gives us:
\begin{equation}
-G_{31} = \V G_{22}(-G_{21}+G_{31})
\end{equation}
\begin{equation}
-G_{32} =  \V G_{22}(-G_{22}+G_{32})\end{equation}
\begin{equation}
-G_{33} = 1+ \V G_{22}(-G_{23}+G_{33})
\end{equation}

We note that $\sm{\Lambda}-\hat{\m{P}}$ is
symmetric. Therefore $\G(\sm{\Lambda})$
is as well, and we can re-write the first set of equations as
\begin{equation}
z G_{11}-G_{12} = 1
\end{equation}
\begin{equation}
z G_{12}-G_{22} = 0
\end{equation}
\begin{equation}
z G_{13}-G_{23} = 0
\end{equation}
In the third set, the first two
equations are now redundant. Therefore we have
\begin{equation}
-G_{23} =  \V G_{22}(-G_{22}+G_{23})
\end{equation}
\begin{equation}
-G_{33} = 1+ \V G_{22}(-G_{23}+G_{33})
\end{equation}
From the second set, we have
\begin{equation}
-G_{12}+G_{22} = 1+\V\left([2G_{22}-3G_{23}+G_{33}]G_{22}\right)
\end{equation}

We can now attempt to solve for
$G_{11}$.
Using the first two equations, we have
\begin{equation}
G_{12} = zG_{11}-1
\end{equation}
\begin{equation}
G_{22} = z(zG_{11}-1)
\end{equation}

Using the fifth equation, we have
\begin{equation}
G_{33} = \frac{\V G_{22}G_{23}-1}{1+\V G_{22}}
\end{equation}
Substituting into the last equation, we have
\begin{equation}
-G_{12}+G_{22} = 1+\V\left(\left[2G_{22}-3G_{23}+\frac{\V G_{22}G_{23}-1}{1+\V G_{22}}\right]G_{22}\right)
\end{equation}

The fourth equation gives us
\begin{equation}
G_{23} = \frac{\V G_{22}^{2}}{1+\V G_{22}}
\end{equation}
Substitution gives us
\begin{equation}
-G_{12}+G_{22} = 1+\V\left(\left[2G_{22}-\frac{3\V G_{22}^{2}}{1+\V G_{22}}+\frac{\V^2 G_{22}^{3}}{(1+\V G_{22})^2}  -\frac{1}{1+\V G_{22}}\right]G_{22}\right)
\end{equation}
We define $\tG \equiv G_{12} = zG_{11}-1$
(the moment generating function). Then we have:
\begin{equation}
-(1-z)\tG = 1+\V\left(\left[2z\tG-\frac{3\V (z\tG)^{2}}{1+\V z\tG}+\frac{\V^2 (z\tG)^{3}}{(1+\V z \tG)^2}  -\frac{1}{1+\V z\tG}\right]z\tG\right)
\end{equation}
We get the following cubic equation
for $\tG$:
\begin{equation}
\V^2 z^2\tG^3 +2\V z \tG^2 + ((\V-1)z+1)\tG+1 = 0
\label{eq:cubic_G}
\end{equation}

The cubic can be solved for analytically
using Cardano's formula to obtain the moment generating function. However, for the
purposes of computing $\fourmom$ we simply
need to compute $m_{2}$, where $\tG$ has the
expansion
\begin{equation}
\tG(z) = \sum_{k=1}^{\infty} m_k z^{-k}
\end{equation}
The cubic in Equation \ref{eq:cubic_G} defines a second-order recursion for the $m_k$. We know that,
by definition, there is no $0$th order term in $M(z)$. The $0$th order term in Equation \ref{eq:cubic_G} gives us $m_1 = \frac{1}{1-\V}$
(as expected). The first order term (coefficient of
$z^{-1}$) gives us
\begin{equation}
\V^2 m_1^3+2\V m_1^2 + (\V-1)m_2+m_1 = 0
\end{equation}
Solving for $m_2$, we have:
\begin{equation}
m_2 = \frac{\V^2}{(1-\V)^4}+\frac{2\V}{(1-\V)^3}+\frac{1}{(1-\V)^2}
\end{equation}
which matches up well with empirical measurements until $1-\V$ becomes small,
when finite size effects begin to matter (Figure \ref{fig:four_mom}).

\section{Experimental setup}

\label{app:experiments}

We based our experiments on a Haiku implementation of a DEQ transformer \cite{khan_deq_2020} which uses the
basic transformer layer \cite{al-rfou_characterlevel_2019}.
We used a pre-trained sentencepiece tokenizer
 \cite{kudo_sentencepiece_2018}. We trained on
Wikitext-103 \cite{merity_pointer_2016} with
a batch size of 512 and a context length of
128.

We added our own code to initialize the dense
layer and the attention layers with different matrix families, though our experiments only 
modified the dense layers. Our hidden state had dimension $700$. In the orthogonal and random
cases, the dense layer expanded the state to
dimension $2800$ before projecting back down to $700$. In the GOE case, since the matrices must 
be square, we kept the dimension at $700$ throughout the network.
We ran for $20$ steps of the Broyden solver.

For the experiments with multiple seeds, we used a learning rate of $10^{-3}$ with a linear
warmup for $2\cdot 10^{3}$ steps, followed by a cosine learning rate decay for $5\cdot 10^{4}$
steps. These parameters were chosen for their overall good performance on the random
initialization with $\sqrt{\V} = 0.1$. We found
that increasing or decreasing the learning rate did not significantly improve performance in any
case.


\end{document}
